\declaretheorem[name=Assumption,style=definition]{assumptionA}
\providecommand{\cL}{\mathcal{L}}
\providecommand{\PP}{\mathsf{P}}
\providecommand{\QQ}{\mathsf{Q}}
\definecolor{myblue}{RGB}{23, 12, 153}
\definecolor{myred}{RGB}{172, 6, 6}
\def\eqref#1{equation~\ref{#1}}
\def\1{\bm{1}}
\def\rvx{{\mathbf{x}}}
\def\vtheta{{\bm{\theta}}}
\def\vg{{\bm{g}}}
\def\vx{{\bm{x}}}
\def\mB{{\bm{B}}}
\def\mI{{\bm{I}}}
\def\mM{{\bm{M}}}
\def\mU{{\bm{U}}}
\def\mV{{\bm{V}}}
\def\mW{{\bm{W}}}
\def\mX{{\bm{X}}}
\def\mSigma{{\bm{\Sigma}}}
\DeclareMathAlphabet{\mathsfit}{\encodingdefault}{\sfdefault}{m}{sl}
\SetMathAlphabet{\mathsfit}{bold}{\encodingdefault}{\sfdefault}{bx}{n}
\newcommand{\tens}[1]{\bm{\mathsfit{#1}}}
\def\tG{{\tens{G}}}
\def\tX{{\tens{X}}}
\def\sR{{\mathbb{R}}}
\def\emA{{A}}
\newcommand{\Ls}{\mathcal{L}}
\begin{document}

\title{LANCE: \underline{L}ow~Rank~\underline{A}ctivatio\underline{n} \underline{C}ompr\underline{e}ssion for Efficient On-Device Continual Learning} 

\titlerunning{LANCE: \underline{L}ow~Rank~\underline{A}ctivatio\underline{n} \underline{C}ompr\underline{e}ssion}

\author{Marco~P.~Apolinario\inst{1,2}\orcidlink{0000-0002-1124-2545} \and
Kaushik~Roy\inst{2}\orcidlink{0000-0002-0735-9695}}

\authorrunning{M.~P.~E.~Apolinario et al.}

\institute{Delft University of Technology, Delft 2628 CD, Netherlands  \and
Purdue University, West Lafayette 47906, IN, USA\\
\email{m.apolinariolainez@tudelft.nl, kaushik@purdue.edu} }

\maketitle

\begin{abstract}
    On-device learning is essential for personalization, privacy, and long-term adaptation in resource-constrained environments. 
    Achieving this requires efficient learning, both fine-tuning existing models and continually acquiring new tasks without catastrophic forgetting. 
    Yet both settings are constrained by high memory cost of storing activations during backpropagation. 
    Existing activation compression methods reduce this cost but rely on repeated low-rank decompositions, introducing computational overhead. Also, such methods have not been explored for continual learning.
    We propose LANCE (Low-rank Activation Compression), a framework that performs one-shot higher-order Singular Value Decomposition (SVD) to obtain a reusable low-rank subspace for activation projection. 
    This eliminates repeated decompositions, reducing both memory and computation. 
    Moreover, fixed low-rank subspaces further enable on-device continual learning by allocating tasks to orthogonal subspaces without storing large task-specific matrices.
    Experiments show that LANCE reduces activation storage up to 250$\times$ while maintaining accuracy comparable to full backpropagation on CIFAR-10/100, Oxford-IIIT Pets, Flowers102, and CUB-200 datasets. 
    On continual learning benchmarks (Split CIFAR-100, Split MiniImageNet, 5-Datasets), it performs competitively with orthogonal gradient projection methods at a fraction of the memory cost.
    These results position LANCE as a practical and scalable solution for efficient fine-tuning and continual learning on edge devices.
\end{abstract}

\section{Introduction}
On-device learning can be a key enabler for personalization, privacy preservation, and rapid adaptation in resource-constrained environments. 
Unlike cloud-based training, on-device learning allows models to adapt directly on hardware such as smartphones, IoT devices, or embedded systems, ensuring that data remain local and inference remains low-latency~\cite{Haoyu2024OnDeviceOnlineLearning}. 
Crucially, many real-world scenarios require not only efficient fine-tuning but also continual learning, where models incrementally acquire new tasks without catastrophic forgetting \cite{Wang2024AApplication, Kudithipudi2022BiologicalMachines, Hadsell2020EmbracingNetworks, Ke2021FWT}.
The combination of efficiency and adaptability is essential for edge devices that must operate over long lifetimes while processing evolving user data streams.
Yet, both fine-tuning and continual learning are severely constrained by the large memory footprint of storing intermediate activations during backpropagation. 
For modern deep neural networks, activation memory often exceeds parameter memory by ~$5$–$10\times$~\cite{lin2020tinytl,lin2022ondevice}, making training infeasible on low-power devices such as microcontrollers with only a few hundred kilobytes of Static Random Access Memory (SRAM)~\cite{ankit2020panther,lin2022ondevice}. 

Several approaches have been proposed to address this issue. Checkpointing~\cite{chen2016checkpoint} reduces memory usage at the cost of extra recomputation; reversible networks~\cite{gomez2017reversible} eliminate the need to store activations but require specialized architectures; and quantization or activation pruning~\cite{chen2021actnn,barley2023activation,yu2022backrazor} compress or sparsify activations but may introduce approximation errors or hardware overhead. 
Alternatively, bio-inspired methods have explored efficient replacements for BP, such as forward-forward methods~\cite{dellaferrera2022pepita, hinton2022forward} and local learning rules~\cite{lillicrap2014random, nokland2016DFA, apolinario2025lls, Frenkel2021LearningNetworks, apolinario2025tess}. 
However, they usually incur accuracy degradation.
More recently, low-rank approaches exploit redundancy in activations using SVD or Tucker decompositions~\cite{nguyen2024activation,nguyen2025beyond}, but these typically recompute decompositions at each training step, adding significant overhead and making them unsuitable for continual learning, where repeated factorization across tasks would require storing large task-specific matrices.

A key insight from orthogonal gradient projection methods in continual learning~\cite{Saha2021GradientLearning,liang2024inflora,apolinario2024code} is that neural activations often lie in a stable, low-rank subspace that can be reused across training. 
This suggests a natural question:
\emph{Can we identify a compact activation subspace and reuse it throughout fine-tuning, while simultaneously enabling efficient continual learning?}

With the above in mind, we introduce LANCE (Low-rank Activation Compression), a framework that applies one-time higher-order singular value decomposition (HOSVD) to extract a reusable low-rank activation subspace, as shown in \cref{fig:method}b. 
Unlike prior iterative approaches~\cite{nguyen2024activation, nguyen2025beyond}, LANCE computes the decomposition only once at the beginning of training. 
Subsequent activations are projected onto this fixed subspace throughout training, avoiding repeated decompositions, thereby significantly reducing memory usage, computational cost, and hardware overhead. 
In contrast to system-level methods such as TinyTL~\cite{lin2020tinytl} or the 256KB training engine~\cite{lin2022ondevice}, which freeze layers or restrict updates to fit memory budgets, LANCE enables full backpropagation across all layers by directly compressing stored activations.
Importantly, these fixed subspaces also allow assigning new tasks to orthogonal components, inherently supporting on-device continual learning without storing large task-specific matrices as in previous continual learning (CL) methods \cite{Saha2021GradientLearning, Saha2023ContinualProjection, apolinario2024code, Lin2022TRGP:Learning, Lin22CUBER, SD2023, lmsp2025}.
By eliminating repeated decompositions while preserving full-model fine-tuning, LANCE provides a practical and scalable solution for memory- and energy-efficient on-device continual learning.

Our main contributions are summarized as follows:
\begin{itemize}
    \item We propose LANCE, a one-shot activation compression framework that constructs a reusable low-rank subspace via HOSVD, avoiding repeated decompositions.
    \item We show that LANCE enables both efficient fine-tuning and continual learning on edge devices by projecting activations onto fixed low-rank subspaces, which can be partitioned across tasks to mitigate forgetting.
    \item We empirically validate LANCE on fine-tuning benchmarks (CIFAR-10/100, Pets, Flowers102, CUB-200 with MCUNet, MobileNetV2, ResNet18/34) and continual learning benchmarks (Split CIFAR-100, Split MiniImageNet, 5-Datasets), demonstrating memory savings of up to $250\times$ while maintaining accuracy within $2\%$ of full backpropagation and achieving competitive performance with full-rank gradient projection methods at a fraction of the memory cost.
\end{itemize}

\section{Methodology}
In this section, we outline our methodology. 
We first describe the main memory bottleneck for on-device learning with full backpropagation (BP), then review higher-order singular value decomposition (HOSVD), and finally present our proposed method, including an overview, step-by-step pseudocode, a complexity analysis, and a convergence analysis.

\begin{figure}[t]
  \centering
  \includegraphics[width=0.8\textwidth]{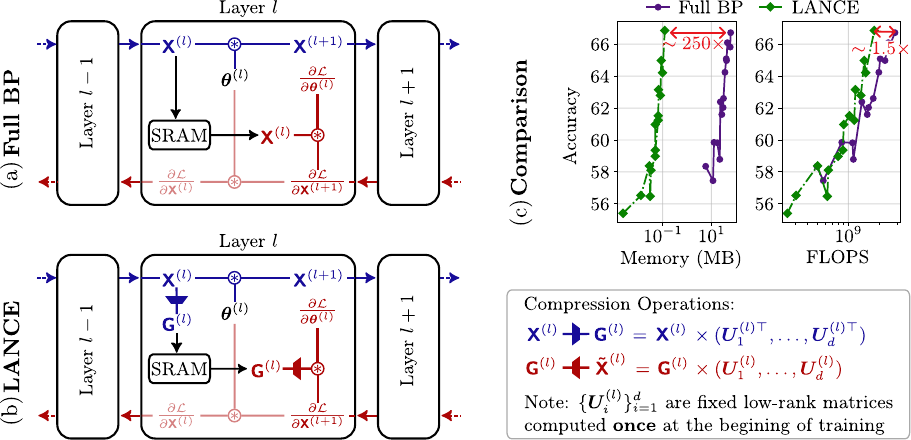}
  \caption{Overview of LANCE for on-device training.
(a) In full backpropagation (BP), the entire activation tensor $\tX^{(l)}$ must be stored in memory for computing gradients, leading to a large memory footprint.
(b) LANCE replaces $\tX^{(l)}$ with a compressed core tensor $\tG^{(l)}$, obtained via one-shot HOSVD using fixed low-rank matrices $\{\mU_i^{(l)}\}_{i=1}^d$ computed once at the beginning of training. Only $\tG^{(l)}$ is stored, while the factors are reused during the backward pass.
(c) Pareto comparisons show that LANCE reduces memory (SRAM) usage by up to $\sim$250$\times$ and FLOPs by $\sim$1.5$\times$ relative to vanilla BP, while maintaining accuracy.}
  \label{fig:method}
\end{figure}

\subsection{Preliminaries}

Training deep neural networks requires storing intermediate \emph{activations} during the forward pass so that they can be reused for gradient computation in backpropagation, as illustrated in \cref{fig:method}a. 
Let $f(\rvx; \vtheta)$ denote a neural network with input $\rvx$ and parameters $\vtheta$. 
During forward propagation, each layer $l$ produces an activation tensor $\tX^{(l)}$ that is stored in SRAM for later use in computing the gradient:
$
\nabla_{\vtheta^{(l)}} \Ls \;=\; \frac{\partial \Ls}{\partial \tX^{(l)}} \; \frac{\partial \tX^{(l)}}{\partial \vtheta^{(l)}}$
, where $\Ls$ is the loss function. 
The cumulative memory footprint of storing $\{ \tX^{(l)} \}_{l=1}^L$ across all $L$ layers often dominates SRAM usage, especially in convolutional networks. 
For example, on CIFAR-100 with ResNet18, storing activations during training can consume up to $5\times$ more memory than storing weights, quickly exhausting the limited resources of edge devices.
A natural way to mitigate this bottleneck is to exploit the \emph{low-rank structure} of activations. 
Instead of storing the full tensor $\tX^{(l)}$, we can approximate it with a compressed representation and reconstruct (or use) it during backpropagation. 
While classical singular value decomposition (SVD) is defined for matrices, activations are inherently higher-order (e.g., batch $\times$ channels $\times$ height $\times$ width), making tensor decompositions more suitable.

\paragraph{Higher-Order Singular Value Decomposition (HOSVD).}
HOSVD (or $N$-mode SVD) provides a principled way to decompose a tensor into mode-specific subspaces. 
Given an order-$d$ activation tensor $\tX \in \sR^{n_1 \times n_2 \times \cdots \times n_d}$, HOSVD expresses it as
$
\tX \;=\; \tG \times ( \mU_{1}, \mU_{2} ,\cdots, \mU_{d}),
$
where $\tG \in \sR^{r_1 \times r_2 \times \cdots \times r_d}$ is the core tensor, $\mU_{i} \in \sR^{n_i \times r_i}$ are orthonormal factor matrices for each mode $i$, and $\times$ denotes the tensor–matrix product. 
The ranks $\{r_i\}_{i=1}^d$ control the compression level, with $r_i \ll n_i$ in the low-rank regime. 
To compute HOSVD, the tensor $\tX$ is \emph{unfolded} along each mode $i$ into a matrix $\mX_{i} \in \sR^{n_i \times \prod_{j \neq i} n_j}$, followed by a matrix SVD
$\mX_{i} = \mU_{i} \mSigma_{i} \mV_{i}^{\top}$.
Truncating to the top-$r_i$ singular vectors yields $\mU^{(i)}$, which captures the dominant subspace of mode $i$. 
Thus, HOSVD produces a compact multi-linear approximation of $\tX$ that preserves most of its structure while discarding redundancy. 

\subsection{Proposed Method: LANCE}
\begin{algorithm}[t]
\caption{LANCE: Low-Rank Activation Compression for On-device Continual Learning}
\label{alg:lance_main}
\begin{algorithmic}[1]
\REQUIRE Pretrained network $f(\rvx;\vtheta)$ with layers $l{=}1,\dots,L$; calibration batches $\{\mathcal{B}_j\}_{j=1}^N$; energy thresholds $\varepsilon$ (LANCE) and $\varepsilon_{\text{CL}}$ (CL); \textit{(CL only)} memory $\mM^{t-1}$ from previous task
\STATE \textit{Notation: we omit layer superscripts for readability; all steps are applied per layer.}

\STATE \textbf{Phase I: One-Shot Subspace Calibration (offline)}
\FOR{each mode $i \in \{1,\dots,d\}$}
  \STATE Estimate covariance $\mB_i$ from calibration activations via \cref{eq:covariance}
  \IF{$i{=}d$}
    \STATE $\hat{\mB}_d \gets (\mI - \mM^{t-1}\mM^{t-1\top})\,\mB_d$ \hfill \textit{// For non-CL or $t=1$, $\mM^{t-1}$ is a zero matrix}
    \STATE Compute SVD, $\hat{\mB}_d = \hat{\mU}_d \hat{\mSigma}_d \hat{\mU}_d^\top$, and $\mB_d = \mU_d \mSigma_d \mU_d^\top$
    \STATE Retain top $r_d$ columns of $\hat{\mU}_d$ satisfying \cref{eq:energy_threshold_cl}; set $\mU_d \gets \hat{\mU}_d[:,1\!:\!r_d]$
  \ELSE
    \STATE Compute SVD $\mB_i = \mU_i \mSigma_i \mU_i^\top$
    \STATE Retain top $r_i$ columns of $\mU_i$ satisfying \cref{eq:energy_threshold}
  \ENDIF
\ENDFOR

\STATE \textbf{Phase II: Fine-Tuning with Low-Rank Activations (on-device)}
\FOR{each minibatch $\mathcal{B}$}
  \STATE \textcolor{myblue}{Forward: project $\mX^{(l)}$ to cores $\tG^{(l)} = \tX^{(l)} \times (\mU_1^\top,\dots,\mU_d^\top)$; store $\tG^{(l)}$ only}
  \STATE \textcolor{myred}{Backward: compute $\nabla_{\vtheta^{(l)}} \cL$ using $\tG^{(l)} \times (\mU_1,\dots,\mU_d)$; update $\vtheta^{(l)}$}
\ENDFOR

\STATE \textbf{Phase III (CL only): Memory Construction \& Update (offline, after task $t$)}
\STATE Estimate last-mode covariance $\mB^{t}_d$ from $N_{\text{CL}}$ calibration mini-batches as in \cref{eq:covariance}
\STATE Compute $\hat{\mB}^{t}_d \gets (\mI - \mM^{t-1}\mM^{t-1\top})\,\mB^{t}_d$, then SVD on $\hat{\mB}^{t}_d$ and $\mB^{t}_d$
\STATE Select $r_d$ via \cref{eq:energy_threshold_cl}; and update memory: $\mM^{t} \gets \mathrm{orth}\big([\mM^{t-1} \;\; \hat{\mU}^{t}_{d,[:,1:r_d]}]\big)$
\end{algorithmic}
\end{algorithm}

To address the challenges of training DNNs on low-power devices, we propose LANCE, a one-shot HOSVD-based low-rank activation compression method for efficient on-device learning. 
LANCE directly targets the primary memory bottleneck in backpropagation: the need to store intermediate activations $\tX^{(l)}$ from hidden layers. 
By projecting activations into a compact low-rank subspace, LANCE reduces both the memory required to store $\tX^{(l)}$ and the number of computations needed to backpropagate through them.

Formally, given projection matrices $\{\mU_{i}\}_{i=1}^d$ obtained via HOSVD, each activation tensor $\tX^{(l)} \in \sR^{n_1 \times n_2 \times \cdots \times n_d}$ is compressed into a smaller core tensor $\tG^{(l)} \in \sR^{r_1 \times r_2 \times \cdots \times r_d}$:
\begin{equation}
    \tG^{(l)} \;=\; \tX^{(l)} \times ( \mU_1^{(l)\top},  \mU_2^{(l)\top}, \cdots,\mU_d^{(l)\top}).
    \label{eq:compression}
\end{equation}
The compressed $\tG^{(l)}$ is saved instead of the full activation and reused during backpropagation. 
Unlike prior works that recompute low-rank decompositions every iteration, LANCE computes the projection matrices only once at the beginning of fine-tuning and reuses them across all epochs. 
This one-shot approach avoids expensive per-iteration SVDs and is particularly well-suited for energy-constrained devices.
The method, summarized in Algorithm~\ref{alg:lance_main}, has two phases:  
(1) a one-shot subspace calibration performed offline, and   
(2) on-device fine-tuning with low-rank activations, which projects activations into low-rank subspaces during the forward pass and computes gradients during the backward pass using the compressed activations (\cref{fig:method}b).  
Each step is detailed below.

\paragraph{Phase I: One-Shot Subspace Calibration (offline).}\label{sec:phase_1}
We estimate the subspace for each hidden layer $l$ using activations from $N$ calibration mini-batches passed through the pretrained model. 
For each mode $i$, we maintain a running covariance estimate:
\begin{equation}
    \mB_{i}[t] \;=\; \tfrac{1}{t}\Big((t-1)\mB_{i}[t-1] \;+\; \tfrac{1}{\prod_{j \neq i} n_j} \mX_{i}^{(l)} \mX_{i}^{(l)\top}\Big),
    \label{eq:covariance}
\end{equation}
with $\mB_{i}[0] = 0$, where $\mX_{i}^{(l)}$ is the mode-$i$ unfolding of $\tX^{(l)}$.  
After $N$ batches, we compute an SVD decomposition, $
    \mB_{i}=\mU_{i} \mSigma_{i} \mU_{i}^{\top}$.
We then truncate to the top-$r_i$ singular vectors according to an energy threshold $\varepsilon$:
\begin{equation}
    \frac{\sum_{j=1}^{r_i} \sigma_j}{\sum_{j=1}^{n_i} \sigma_j} \;\geq\; \varepsilon,
    \label{eq:energy_threshold}
\end{equation}
where $\sigma_j$ is the $j$-th singular value of $\mB_{i}$.  
This ensures that the subspace captures at least an $\varepsilon$ fraction of activation variance.  
Because this decomposition is computed only once (before fine-tuning) and amortized across training, its cost is negligible relative to repeated decompositions.  

\paragraph{Phase II: Fine-Tuning with Low-Rank Activations (on-device).}
During fine-tuning with BP (\cref{fig:method}b), we distinguish the {\color{myblue}forward} and {\color{myred}backward} passes.

\emph{{\color{myblue}Forward pass.}} Each activation $\tX^{(l)}$ is projected into the learned low-rank subspace using the precomputed matrices $\{\mU^{(l)}_{i}\}_{i=1}^{d}$, following \cref{eq:compression} to obtain a core tensor $\tG^{(l)}$.
Only the compact tensor $\tG^{(l)}$ and the fixed $\{\mU^{(l)}_{i}\}_{i=1}^{d}$ need to be stored for backpropagation. 
This reduces the memory footprint from $\mathcal{O}(\prod_{i=1}^d n_i)$ to $\mathcal{O}(\prod_{i=1}^d r_i + \sum_{i=1}^d n_i r_i)$ per activation.

\emph{{\color{myred}Backward pass.}} The gradient with respect to the weights is computed using the compressed activations.  
Given the gradient of the loss with respect to the layer output, $\nabla_{\tX^{(l+1)}} \Ls$, we reconstruct the contribution with respect to the layer's parameters $\vtheta^{(l)}$ by reversing the projections:
\begin{equation}
    \nabla_{\vtheta^{(l)}} \tilde{\Ls} \;=\; \frac{\partial\Ls^{\top}}{\partial\tX^{(l+1)}} \;\big(\tG^{(l)}\times(\mU^{(l)}_1, \cdots, \mU^{(l)}_d)\big).
\end{equation}
Because $\{\mU^{(l)}_i\}_{i=1}^d$ are orthonormal, this operation preserves gradient directions up to the truncation error introduced by the low-rank approximation.
We prove that the approximated gradient $-\nabla_{\vtheta^{(l)}} \tilde{\Ls}$ is a descent direction and that the overall method converges in Sec.~\ref{sec:theory}.
Finally, gradients with respect to activations, $\nabla_{\tX^{(l)}}\Ls$, are computed as usual by automatic differentiation; the compression only affects which forward tensors are stored (cores $\tG^{(l)}$ vs.\ full $\tX^{(l)}$), not the functional form of the backward.

\subsection{LANCE for On-Device Continual Learning}\label{sec:lance-cl}
We extend LANCE to the continual learning (CL) setting, where tasks arrive sequentially $t=1,2,\dots,T$. After completing task $t$, we retain for each layer a compact set of \emph{important directions} on the last mode (the input dimension of the layer’s weights), denoted by $\mM^t$. 
When calibrating the HOSVD for the next task $t{+}1$, the new columns of $\mU_d$ (the mode-$d$ factors) are chosen to lie strictly in the \emph{null space} of $\mM^t$. 
This ensures that task-$t{+}1$ learns from directions orthogonal to those already used by previous tasks, thereby preventing interference and mitigating forgetting.

For clarity, we omit the explicit layer superscript in the following notation; all operations (memory construction, calibration, and updates) are applied independently at each layer.

\paragraph{Phase III: Memory construction and update.}
At the end of training on task $t$, we estimate the subspace $\mM^t$ by accumulating the covariance of activations along the last mode, $\mB_d^{t}$, as in \cref{eq:covariance} using $N_{\text{CL}}$ calibration mini-batches. 
An SVD decomposition of $\mB^{t}_d$ yields a basis of principal directions, and we retain the top $r_d$ components that capture at least an energy fraction $\varepsilon_{\text{CL}}$. 
To ensure orthogonality with previously stored directions, we project $\mB^{t}_d$ onto the complement of $\mM^{t-1}$: $\hat\mB^{t}_d \;=\; \mB^{t}_d -  \mM^{t-1}\mM^{t-1\top} \mB^{t}_d$, where $\mM^{0}$ is initialized as empty.
We select the smallest $r_d$ such that:
\begin{equation}
    \frac{\sum_{j=1}^{n_d} \sigma_j - \sum_{j=r_d}^{n_d} \hat\sigma_j}{\sum_{j=1}^{n_d} \sigma_j} \;\geq\; \varepsilon_{\text{CL}}.
    \label{eq:energy_threshold_cl}
\end{equation}
The corresponding top-$r_d$ eigenvectors form $\hat\mU^{t}_{d}$, which are merged with the previous memory to update: $ \mM^{t} \;\leftarrow\; \mathrm{orth}\!\big([\mM^{t-1} \;\; \hat\mU^{t}_{d,[:,1:r_d]}]\big)$.

\paragraph{Null-space constrained calibration.}
For a new task $t{+}1$, we update the one-shot calibration of LANCE by incorporating the memory constraint only in mode-$d$; all other modes follow the procedure in Section~\ref{sec:phase_1}. 
We compute the covariance $\mB_d$ for the new task, remove contributions along $\mM^t$, $\hat\mB_d \;=\; \mB_d - \mB_d \mM^t\mM^{t\top}$, and select the top $r_d$ eigenvectors of $\hat\mB_d$ that satisfy an energy threshold $\varepsilon$ similar to \cref{eq:energy_threshold_cl}. 
The resulting low-rank factors used during fine-tuning are $\{\mU^{(l)}_{i}\}_{i=1}^{d-1} \cup \hat\mU^{(l)}_{d,[:,1:r_d]}$, where $\hat\mU^{(l)}_{d,[:,1:r_d]}$ are the retained singular vectors of $\hat\mB_d$. 
This guarantees that activations of the new task are compressed into a subspace orthogonal to those of previous tasks, thereby reducing catastrophic forgetting.

\paragraph{Discussion.}
Both memory update and null-space constrained calibration are performed \emph{offline}, so they do not affect the efficiency of on-device fine-tuning. 
The method requires storing only the compact memory matrices $\{\mM^{(l)}\}$ and applying inexpensive projections during calibration. 
By construction, interference with past tasks is eliminated along the stored subspaces, and forgetting is mitigated without replay buffers or large task-specific models. 
In contrast to prior gradient-projection methods~\cite{Saha2021GradientLearning, Saha2023ContinualProjection, Lin2022TRGP:Learning, Lin22CUBER, apolinario2024code}, which explicitly use $\{\mM^{(l)}\}$ for online gradient projection and must store such matrices in SRAM during training, our approach achieves continual learning naturally by projecting activations into fixed low-rank subspaces that lie in the null space of previous tasks, retaining LANCE's memory efficiency.

\subsection{Convergence Analysis}\label{sec:theory}
We provide a brief convergence analysis showing that LANCE yields valid descent directions and converges to projected stationary points. 
Full proofs are deferred to Appendix~B.
\begin{assumptionA}[Orthogonal truncation]\label{assump:orth}
For every layer $l$ and mode $i$, the calibration bases $\mU_i^{(l)}$ have orthonormal columns; hence
$\PP^{(l)} := \mU_1^{(l)}\mU_1^{(l)\top} \otimes \cdots \otimes \mU_d^{(l)}\mU_d^{(l)\top}$ is an orthogonal projector.
\end{assumptionA}

\begin{assumptionA}[Smoothness]\label{assump:smooth}
The empirical loss $\cL(\vtheta)$ is $L$-smooth.
\end{assumptionA}

\begin{assumptionA}[Layer linearity in inputs]\label{assump:linear}
Each trainable layer is linear in its input during backpropagation (dense or convolutional after unfolding).
Nonlinearities are piecewise linear and treated as fixed locally.
\end{assumptionA}

\begin{assumptionA}[Energy capture]\label{assump:energy}
One-shot HOSVD retains a fraction $\varepsilon\in(0,1]$ of activation energy per mode, inducing a bound on gradient leakage
outside the retained input subspace (used in \Cref{prop:lance-gap}).
\end{assumptionA}


\begin{restatable}[Projected gradient \& descent]{theorem}{LANCEDescent}\label{thm:lance-descent}
Under Assumptions~\ref{assump:orth} and \ref{assump:linear}, for any layer $l$ with input projector $\PP^{(l)}$,
the LANCE weight gradient equals the right-Frobenius orthogonal projection of the full gradient:
$
\nabla_{\mW}\cL_{\mathrm{LANCE}} = \nabla_{\mW}\cL_{\mathrm{full}}\,\PP^{(l)}.
$
Consequently,
$\langle \nabla_{\mW}\cL_{\mathrm{LANCE}},\,\nabla_{\mW}\cL_{\mathrm{full}}\rangle
= \|\nabla_{\mW}\cL_{\mathrm{LANCE}}\|_F^2 \ge 0$, so
$-\nabla_{\mW}\cL_{\mathrm{LANCE}}$ is a descent direction unless it vanishes.
\end{restatable}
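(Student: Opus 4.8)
The plan is to unfold the backward-pass computation for a single trainable layer and show directly that projecting the stored activation onto the HOSVD subspace is algebraically equivalent to right-multiplying the full weight gradient by the orthogonal projector $\PP^{(l)}$. By \Cref{assump:linear}, the layer acts linearly on its (unfolded) input, so in matrix form the forward map is $\mY = \mW\mX^{(l)}$ and the full weight gradient is $\nabla_{\mW}\cL_{\mathrm{full}} = \mG_Y \mX^{(l)\top}$, where $\mG_Y := \partial\cL/\partial\mY$ is the incoming output gradient. The first step is to write the LANCE backward pass in the same notation: LANCE stores the core $\tG^{(l)}$ and reconstructs the activation as $\tX^{(l)}\times(\mU_1\mU_1^\top,\dots,\mU_d\mU_d^\top)$, whose mode-unfolding (using the Kronecker structure of multilinear products) is exactly $\PP^{(l)}\mathrm{vec}$-wise action, i.e. the reconstructed unfolded input equals $\mX^{(l)}$ composed with the Kronecker projector $\PP^{(l)}$.

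Second, I would substitute this reconstructed activation into the gradient formula to get $\nabla_{\mW}\cL_{\mathrm{LANCE}} = \mG_Y\,(\PP^{(l)}\mX^{(l)})^\top = \mG_Y\,\mX^{(l)\top}\PP^{(l)\top}$, using that $\PP^{(l)}$ is symmetric (each $\mU_i\mU_i^\top$ is a symmetric idempotent under \Cref{assump:orth}). Since $\PP^{(l)\top}=\PP^{(l)}$, this collapses to $\nabla_{\mW}\cL_{\mathrm{full}}\,\PP^{(l)}$, which is the claimed identity. The care needed here is purely bookkeeping: matching the tensor-mode products against the Kronecker-product projector and confirming the transpose lands on the correct (input/right) side of the gradient, so that the projection is genuinely the right-Frobenius projection advertised in the statement.

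Third, for the descent consequence I would use idempotence and symmetry of $\PP^{(l)}$ together with the cyclic/adjoint property of the Frobenius inner product. Writing $A := \nabla_{\mW}\cL_{\mathrm{full}}$ and $B := \nabla_{\mW}\cL_{\mathrm{LANCE}} = A\PP^{(l)}$, we have
\begin{equation}
\langle B, A\rangle = \langle A\PP^{(l)}, A\rangle = \Tr(\PP^{(l)\top}A^\top A) = \Tr(\PP^{(l)}A^\top A),
\end{equation}
and inserting $\PP^{(l)}=\PP^{(l)}\PP^{(l)}$ gives $\Tr(\PP^{(l)}\PP^{(l)}A^\top A) = \Tr((A\PP^{(l)})^\top(A\PP^{(l)})) = \|B\|_F^2 \ge 0$. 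This both establishes the inner-product identity and shows $-B$ is a descent direction whenever $B\neq 0$.

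I do not expect a serious obstacle; the result is essentially an exercise in linear algebra once \Cref{assump:orth,assump:linear} are invoked. The one place demanding genuine care is the first step: justifying that storing and later reconstructing from the core $\tG^{(l)}$ implements exactly the Kronecker projector $\PP^{(l)}$ acting on the unfolded input, and in particular that the nonlinearities can be frozen (treated as fixed piecewise-linear maps) so that the ``layer linear in its input'' hypothesis legitimately applies to the backward pass. If the forward map has an additive bias or the convolution-to-unfolding correspondence introduces index reshuffling, I would need to verify the projector commutes appropriately with the unfolding operator; handling that reshaping correctly, rather than any deep inequality, is the main thing to get right.
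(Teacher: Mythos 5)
Your proposal is correct and follows essentially the same route as the paper's proof: write the full gradient as upstream error times transposed unfolded input, observe that LANCE merely replaces that input by $\PP^{(l)}\vx$, use symmetry of $\PP^{(l)}$ to move it to the right of the gradient, and then use symmetry plus idempotence in the trace to obtain $\langle \nabla_{\mW}\cL_{\mathrm{LANCE}},\nabla_{\mW}\cL_{\mathrm{full}}\rangle = \|\nabla_{\mW}\cL_{\mathrm{LANCE}}\|_F^2$. Your version only spells out the Kronecker-projector bookkeeping and the insertion of $\PP^{(l)}=\PP^{(l)}\PP^{(l)}$ that the paper leaves implicit.
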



\begin{restatable}[Monotone decrease \& projected stationarity]{theorem}{LANCEConvergence}\label{thm:lance-convergence}
Let $\QQ$ be the block projector that right-multiplies each layer’s weight gradient by its $\PP^{(l)}$.
Under Assumption~\ref{assump:smooth}, the LANCE update
$\vtheta_{k+1}=\vtheta_k-\eta\,\QQ\,\nabla\cL(\vtheta_k)$ with $\eta\in(0,1/L]$ satisfies
$
\cL(\vtheta_{k+1}) \;\le\; \cL(\vtheta_k) \;-\; \tfrac{\eta}{2}\,\|\QQ\nabla\cL(\vtheta_k)\|_2^2,
$
hence $\|\QQ\nabla\cL(\vtheta_k)\|_2\to 0$ and every limit point $\vtheta_\star$ obeys
$\QQ\nabla\cL(\vtheta_\star)=\mathbf{0}$.
\end{restatable}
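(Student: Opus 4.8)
The plan is to prove \Cref{thm:lance-convergence} by combining the standard descent lemma for $L$-smooth functions with the projected-gradient identity already established in \Cref{thm:lance-descent}. The key structural observation I would exploit is that $\QQ$, being the block-diagonal operator that right-multiplies each layer's weight gradient by the orthogonal projector $\PP^{(l)}$, is itself an orthogonal projector on the full parameter space: it is linear, and since each $\PP^{(l)}$ is idempotent and symmetric (by \Cref{assump:orth}), so is $\QQ$, giving $\QQ^2=\QQ=\QQ^\top$. This means the LANCE search direction $-\QQ\nabla\cL(\vtheta_k)$ is the projection of the true negative gradient onto the subspace of admissible updates, which is exactly the geometry needed to invoke the descent lemma cleanly.

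First I would write the descent lemma: for an $L$-smooth loss, $\cL(\vtheta_{k+1})\le \cL(\vtheta_k)+\langle\nabla\cL(\vtheta_k),\vtheta_{k+1}-\vtheta_k\rangle+\tfrac{L}{2}\|\vtheta_{k+1}-\vtheta_k\|_2^2$. Then I would substitute the LANCE update $\vtheta_{k+1}-\vtheta_k=-\eta\,\QQ\nabla\cL(\vtheta_k)$. The inner-product term becomes $-\eta\langle\nabla\cL(\vtheta_k),\QQ\nabla\cL(\vtheta_k)\rangle$, and using $\QQ=\QQ^\top=\QQ^2$ this equals $-\eta\|\QQ\nabla\cL(\vtheta_k)\|_2^2$ (this is the finite-dimensional vectorized analogue of the Frobenius identity $\langle\nabla_\mW\cL_{\mathrm{LANCE}},\nabla_\mW\cL_{\mathrm{full}}\rangle=\|\nabla_\mW\cL_{\mathrm{LANCE}}\|_F^2$ from \Cref{thm:lance-descent}, stacked over all layers). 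The quadratic term is $\tfrac{L}{2}\eta^2\|\QQ\nabla\cL(\vtheta_k)\|_2^2$. Collecting terms yields
\[
\cL(\vtheta_{k+1})\le \cL(\vtheta_k)-\eta\Bigl(1-\tfrac{L\eta}{2}\Bigr)\|\QQ\nabla\cL(\vtheta_k)\|_2^2.
\]
For $\eta\in(0,1/L]$ we have $1-\tfrac{L\eta}{2}\ge \tfrac12$, which gives the stated bound $\cL(\vtheta_{k+1})\le\cL(\vtheta_k)-\tfrac{\eta}{2}\|\QQ\nabla\cL(\vtheta_k)\|_2^2$.

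Next I would derive the two conclusions. Summing the per-step inequality telescopes: $\tfrac{\eta}{2}\sum_{k=0}^{K-1}\|\QQ\nabla\cL(\vtheta_k)\|_2^2\le \cL(\vtheta_0)-\cL(\vtheta_K)$. Assuming $\cL$ is bounded below (implicit in the convergence claim; I would state this as needed), the right side is bounded uniformly in $K$, so the series $\sum_k\|\QQ\nabla\cL(\vtheta_k)\|_2^2$ converges, forcing $\|\QQ\nabla\cL(\vtheta_k)\|_2\to 0$. For the stationarity of limit points, I would take any convergent subsequence $\vtheta_{k_j}\to\vtheta_\star$; by continuity of $\nabla\cL$ (from $L$-smoothness) and of the fixed linear operator $\QQ$, the map $\vtheta\mapsto\QQ\nabla\cL(\vtheta)$ is continuous, so $\QQ\nabla\cL(\vtheta_\star)=\lim_j\QQ\nabla\cL(\vtheta_{k_j})=\mathbf 0$.

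The main obstacle, and the point I would be most careful about, is the claim that $\QQ$ is genuinely a fixed, symmetric, idempotent operator throughout training. This is clean under \Cref{assump:orth} and \Cref{assump:linear} because the projectors $\PP^{(l)}$ are computed once and frozen, so $\QQ$ is a constant operator and the descent-lemma substitution is unambiguous. The subtlety is that the theorem implicitly treats the whole network update as a single vectorized gradient step with operator $\QQ$, whereas LANCE applies per-layer projections; I would justify the reduction by stacking all layer-weight gradients into one vector and noting that right-multiplication by $\PP^{(l)}$ in matrix form corresponds to left-multiplication by a symmetric idempotent block in the vectorized Kronecker representation, so the block-diagonal $\QQ$ inherits $\QQ=\QQ^\top=\QQ^2$. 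The only additional assumption I need to flag explicitly is that $\cL$ is bounded below, which I would either add as a standing hypothesis or note follows from the loss being nonnegative.
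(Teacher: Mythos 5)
Your proposal is correct and follows essentially the same route as the paper's proof: the descent lemma with $\vg=\QQ\nabla\cL(\vtheta_k)$, the orthogonal-projector identity $\langle\nabla\cL,\QQ\nabla\cL\rangle=\|\QQ\nabla\cL\|_2^2$, the resulting per-step decrease for $\eta\le 1/L$, and a telescoping sum plus continuity to conclude projected stationarity of limit points. Your added care—justifying $\QQ=\QQ^\top=\QQ^2$ via the vectorized block structure and explicitly flagging that $\cL$ must be bounded below for the summation argument—tightens two points the paper leaves implicit, but does not change the argument.
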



\begin{restatable}[Stationarity gap vs.\ truncation]{proposition}{LANCEGap}\label{prop:lance-gap}
If along the iterates $\|(I-\QQ)\nabla\cL(\vtheta)\|_2 \le C\sqrt{1-\varepsilon}$ for some $C>0$,
then any limit point $\vtheta_\star$ satisfies $\|\nabla\cL(\vtheta_\star)\|_2 \le C\sqrt{1-\varepsilon}$.
\end{restatable}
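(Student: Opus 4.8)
The plan is to reduce the claim to an orthogonal (Pythagorean) split of the full gradient induced by $\QQ$, and then to invoke the stationarity guarantee $\QQ\nabla\cL(\vtheta_\star)=\mathbf{0}$ already established in \Cref{thm:lance-convergence}. The key structural fact I would establish first is that $\QQ$ is an \emph{orthogonal} projector on the parameter space under the Frobenius (block) inner product. By \Cref{assump:orth} each $\PP^{(l)}$ is a symmetric idempotent, and the linear map $\mW\mapsto\mW\PP^{(l)}$ that right-multiplies a weight gradient is itself idempotent and self-adjoint with respect to $\langle\cdot,\cdot\rangle_F$, since $\mathrm{tr}(\PP^{(l)}\mW^\top\mN)=\mathrm{tr}(\mW^\top\mN\PP^{(l)})$ by cyclicity of the trace and symmetry of $\PP^{(l)}$. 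Assembling these blocks over layers, $\QQ$ and $I-\QQ$ have orthogonal ranges, so for every $\vtheta$ we obtain the identity $\|\nabla\cL(\vtheta)\|_2^2 = \|\QQ\nabla\cL(\vtheta)\|_2^2 + \|(I-\QQ)\nabla\cL(\vtheta)\|_2^2$.

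Next I would pass to a limit point. Let $\vtheta_\star$ be any limit point, realized as $\vtheta_{k_j}\to\vtheta_\star$ along a subsequence. Evaluating the Pythagorean identity at $\vtheta_{k_j}$ and inserting the standing hypothesis $\|(I-\QQ)\nabla\cL(\vtheta_{k_j})\|_2\le C\sqrt{1-\varepsilon}$ gives $\|\nabla\cL(\vtheta_{k_j})\|_2^2 \le \|\QQ\nabla\cL(\vtheta_{k_j})\|_2^2 + C^2(1-\varepsilon)$. Since $\cL$ is $L$-smooth by \Cref{assump:smooth}, $\nabla\cL$ is continuous, so $\nabla\cL(\vtheta_{k_j})\to\nabla\cL(\vtheta_\star)$ and $\QQ\nabla\cL(\vtheta_{k_j})\to\QQ\nabla\cL(\vtheta_\star)$; by \Cref{thm:lance-convergence} we have $\|\QQ\nabla\cL(\vtheta_k)\|_2\to 0$, hence $\QQ\nabla\cL(\vtheta_\star)=\mathbf{0}$. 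Taking $j\to\infty$ then yields $\|\nabla\cL(\vtheta_\star)\|_2^2 \le C^2(1-\varepsilon)$, i.e.\ $\|\nabla\cL(\vtheta_\star)\|_2\le C\sqrt{1-\varepsilon}$, which is the claim.

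The only delicate point is transferring the hypothesis bound, which is stated \emph{along the iterates}, to the limit point itself. I would handle this exactly as above: rather than assuming the bound holds at $\vtheta_\star$ a priori, I keep it at $\vtheta_{k_j}$ and take the limit, relying solely on continuity of $\nabla\cL$ and the vanishing of the $\QQ$-component from \Cref{thm:lance-convergence}. A secondary point worth stating cleanly is that the relevant notion of ``orthogonal projector'' is with respect to the Frobenius inner product on stacked weight gradients, because $\QQ$ acts by right-multiplication rather than as a symmetric matrix on a flattened vector; once that identification is made, the orthogonal decomposition and the remainder of the argument are routine.
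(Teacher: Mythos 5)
Your proof is correct and takes essentially the same route as the paper's: split $\nabla\cL$ into its $\QQ$- and $(I-\QQ)$-components, let \Cref{thm:lance-convergence} kill the first, bound the second by the standing hypothesis, and pass to a limit point. The only differences are minor — the paper uses the triangle inequality (which needs no orthogonality of $\QQ$ at all) where you invoke the Pythagorean identity, and your handling of the limit passage via continuity of $\nabla\cL$ along a subsequence is spelled out more carefully than the paper's terse ``take a limit point.''
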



\paragraph{Insight.}
\emph{Together, these results show that LANCE behaves like full BP restricted to the low-rank activation subspace: updates are valid descent directions, the loss decreases monotonically, and iterates converge to projected stationary points. The residual gap depends only on the energy threshold $\varepsilon$, so higher $\varepsilon$ yields solutions closer to true stationary points, while lower $\varepsilon$ trades accuracy for efficiency in a quantifiable way.}

\section{Experimental Evaluation}\label{sec:evaluation}
We evaluate LANCE in two complementary settings: 
(i) \emph{single-task fine-tuning}, where a pretrained model is adapted to a target dataset on an edge device, and 
(ii) \emph{continual learning}, where multiple tasks arrive sequentially and must be learned without catastrophic forgetting. 
Our experiments aim to answer the following questions:
\begin{itemize}
    \item[\textbf{Q1}] Does the one-shot subspace calibration preserve gradient fidelity in practice?
    \item[\textbf{Q2}] How effective is LANCE at reducing activation memory and computational cost during on-device fine-tuning, while maintaining accuracy?
    \item[\textbf{Q3}] Can LANCE leverage fixed low-rank subspaces to achieve competitive performance on continual learning benchmarks, while offering superior memory efficiency?
\end{itemize}

\paragraph{Experimental Setup. }
(i) Datasets: 
For single-task fine-tuning, we use CIFAR-10/100 \cite{Krizhevsky2009LearningImages}, Oxford-IIIT Pets \cite{petsdataset}, Flowers-102 \cite{flower102dataset}, CUB-200 \cite{cub200}, and ImageNet~\cite{Krizhevsky2012ImageNetNetworks}, covering a range of dataset sizes and complexities. 
For continual learning, we evaluate on Split CIFAR-100 (20 tasks), Split MiniImageNet (20 tasks), and the 5-Datasets benchmark (CIFAR-10, MNIST, SVHN, Fashion-MNIST, and notMNIST), following common CL protocols.
(ii) Metrics: 
We report the following metrics: classification accuracy in the target datasets; memory usage, measured as peak activation storage in MB (estimated in Appendix~C); and computational cost in FLOPs. For continual learning benchmarks, we additionally report the average final accuracy over all tasks (ACC) and the Backward Transfer (BWT), which quantifies the forgetting of previously learned tasks when new tasks are introduced, as defined in Appendix~A.2.

\begin{figure}[t]
  \centering
  \includegraphics[width=0.7\textwidth]{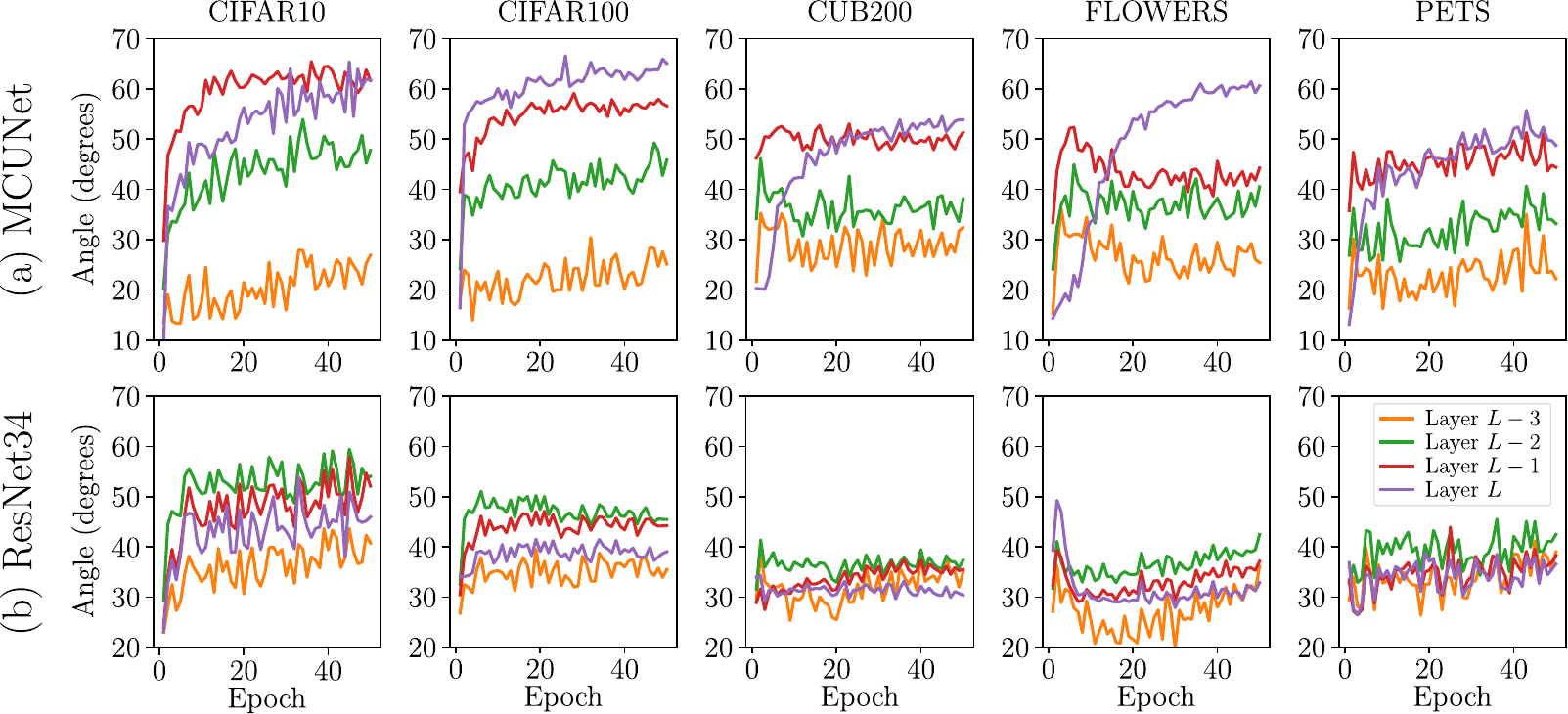}
  \caption{Gradient alignment between full BP and LANCE. We plot the angle between true gradients and LANCE-projected gradients across epochs for different fine-tuning tasks. LANCE consistently produces gradients within 70$^\circ$ of the true gradient, and angles stabilize as training progresses, indicating preserved descent directions.}
  \label{fig:grad_alignment}
\end{figure}
\subsection{Results: Single-task Fine-Tuning Efficiency}\label{sec:results_finetuning}
We first evaluate LANCE on fine-tuning the last 2 or 4 layers of MCUNet~\cite{Lin2020MCUNet:Devices}, MobileNetV2~\cite{mobilenetv2},  ResNet18~\cite{he2016deep}  and ResNet34~\cite{he2016deep}.  
All models are pretrained on ImageNet-1k. 
For LANCE, the offline Phase I calibration uses $N{=}100$ mini-batches with an energy threshold $\varepsilon{=}0.7$. 
Fine-tuning runs for 50 epochs.

\begin{table}[t]
\centering
\caption{
{ImageNet-scale fine-tuning on Split ImageNet-1k.
We pretrain each model on 500 ImageNet classes (Split~A) and fine-tune on the remaining 500
classes (Split~B).} 
}
\label{tab:imagenet_split}
\resizebox{0.5\textwidth}{!}{
\begin{tabular}{lcccccc}
\toprule
& \multicolumn{3}{c}{ResNet18} & \multicolumn{3}{c}{ResNet34} \\
\cmidrule(lr){2-4} \cmidrule(lr){5-7}
Method & Acc~$\uparrow$ & MB~$\downarrow$ & TFLOPs~$\downarrow$
       & Acc~$\uparrow$ & MB~$\downarrow$ & TFLOPs~$\downarrow$ \\
\midrule
BP (2 layers) & 60.70 & 24.50 & 0.059 & 60.77 & 24.50 & 0.059 \\
BP (4 layers) & 64.28 & 61.25 & 0.090 & 64.21 & 49.00 & 0.118 \\
\midrule
LANCE (2 layers) & 59.18 & 7.12 & 0.048 & 58.59 & 4.49 & 0.046 \\
LANCE (4 layers) & 62.20 & 16.59 & 0.075 & 62.66 & 10.79 & 0.093 \\
\bottomrule
\end{tabular}
}
\end{table}

\textbf{Answer to Q1.} 
To assess gradient fidelity, we compute the angle between LANCE gradients ($\nabla_\vtheta \cL_{\text{LANCE}}$) and full BP gradients ($\nabla_\vtheta \cL_{\text{BP}}$), normalized as $\nabla_\vtheta\cL/||\nabla_\vtheta\cL||_F$, on one mini-batch at the start of each epoch. 
\cref{fig:grad_alignment} shows that LANCE gradients remain within $\sim$70$^\circ$ of the full gradients, with angles stabilizing over time. 
This empirically validates \cref{thm:lance-descent}, confirming that LANCE preserves gradient fidelity and maintains valid descent directions. 
{This observation is consistent with the ablation in} \cref{sec:main_ablations}, {where gradient fidelity remains stable across a wide range of compression ratios controlled by the energy threshold~$\varepsilon$.}

\textbf{Answer to Q2.} 
Across all datasets and models, LANCE reduces activation storage by up to $250\times$ relative to BP while maintaining competitive accuracy. 
\Cref{table:fine-tuning-performance} reports results on CIFAR-10/100 and Pets and CUB200. 
Compared to iterative low-rank methods such as ASI~\cite{nguyen2025beyond}, or HOSVD~\cite{nguyen2024activation} where a full SVD is recomputed at each step, LANCE achieves comparable compression with higher accuracy and similar FLOPs, confirming that repeated decompositions are unnecessary. 
Although BP attains higher accuracy when the same number of layers are fine-tuned, LANCE's extreme memory efficiency enables fine-tuning more layers under strict SRAM budgets. 
For example, fine-tuning 4 layers with LANCE reaches the accuracy of 2-layer BP while still fitting within microcontroller constraints. 
These results demonstrate that fixed low-rank subspaces retain sufficient representational power for efficient on-device fine-tuning.

\begin{table}[t]
  \centering
  \caption{Performance and efficiency comparison. Results are reported in terms of accuracy, memory usage, and TFLOPs. \textsuperscript{\dag} denote the results from the respective original paper.}
  \label{table:fine-tuning-performance}
  \resizebox{\textwidth}{!}{%
    \begin{tabular}{llc|ccc|ccc|ccc|ccc}
      \toprule
      \multirow{2}{*}{\bf Mdl.} & \multirow{2}{*}{\bf Method } & \multirow{2}{*}{\bf \# Layers} & \multicolumn{3}{c}{\bf CIFAR10 } & \multicolumn{3}{|c}{\bf CIFAR100} & \multicolumn{3}{|c}{\bf Pets} & \multicolumn{3}{|c}{\bf CUB200} \\ \cmidrule(lr){4-15}
       & & &  \textbf{Acc} $\uparrow$ & \textbf{MB} $\downarrow$ & \textbf{TFLOPS} $\downarrow$
        & \textbf{Acc} $\uparrow$ & \textbf{MB} $\downarrow$ & {\bf TFLOPS} $\downarrow$ & \textbf{Acc} $\uparrow$ & \textbf{MB} $\downarrow$ & {\bf TFLOPS} $\downarrow$ & \textbf{Acc} $\uparrow$ & \textbf{MB} $\downarrow$ & {\bf TFLOPS} $\downarrow$ \\ 
      \midrule
      \multirow{6}{*}{\rotatebox{90}{MCUNet}} 
          & \multirow{2}{*}{BP} 
              & 2 & 75.85 & 11.71 & 0.000 & 52.47 & 11.71 & 0.000 & 56.30 & 11.71 & 0.000 & 33.05 & 11.71 & 0.000 \\
          &    & 4 & 82.91 & 17.57 & 0.001 & 58.38 & 17.57 & 0.001  & 60.01 & 17.57 & 0.001 & 34.13 & 17.57 & 0.001 \\ \cmidrule(lr){2-15}
          & \multirow{2}{*}{HOSVD} 
              & 2 & 70.91 & 0.04 & 1.930 & 48.00 & 0.05 & 1.930 & 56.11 & 0.06 & 1.93 & 31.22 & 0.06 & 1.93  \\
          &    & 4 & 75.63 & 0.12 & 2.590 & 52.34 & 0.10 & 2.590 & 57.89 & 0.17 & 2.59 & 31.72 & 0.14 & 2.59 \\ \cmidrule(lr){2-15}
          & \multirow{2}{*}{LANCE} 
              & 2 & 70.64 & 0.04 & 0.000 & 47.57 & 0.04 & 0.000 & 57.26 & 0.06 & 0.000 & 31.44 & 0.05 & 0.000  \\
          &    & 4 & 76.23 & 0.09 & 0.000 & 52.59 & 0.09 & 0.000 & 58.05 & 0.14 & 0.000 & 30.87 & 0.13 & 0.000  \\ 
      \midrule
      \multirow{8}{*}{\rotatebox{90}{MobileNetV2}} 
          & \multirow{2}{*}{BP} 
              & 2 & 91.72 & 30.62 & 0.01 & 73.43  & 30.62 & 0.01 & 89.94 & 30.62 & 0.01 & 64.37 & 30.62 & 0.01 \\
          &    & 4 & 92.23 & 57.42 & 0.01 & 74.69 & 57.42 & 0.01 & 90.32 & 57.42 & 0.01 & 65.86 & 57.42 & 0.01  \\ \cmidrule(lr){2-15}
          & \multirow{2}{*}{HOSVD} 
              & 2 & 85.67 & 0.13 & 11.87 & 67.28 & 0.14 & 11.87 & 89.58 & 0.13 & 11.87 & 58.11 & 0.13 & 11.87  \\
          &    & 4 & 89.03 & 0.16 & 22.86 & 69.33 & 0.15 & 22.86 & 89.91 & 0.23 & 22.86 & 60.30 & 0.16 & 22.86  \\ \cmidrule(lr){2-15}
          & \multirow{2}{*}{ASI \textsuperscript{\dag}} 
              & 2 & 85.09   & 0.26      & 0.01    & 61.03   & 0.15      & 0.01 & 88.13 & 0.15     & 0.01  & 46.44  & 0.26      & 0.01  \\
          &    & 4 & 88.03   & 0.63      & 0.06    & 66.18   & 0.63      & 0.06 & 88.91 & 0.76     & 0.06 & 50.69  & 0.63      & 0.06 \\ \cmidrule(lr){2-15}
          & \multirow{2}{*}{LANCE} 
              & 2 &  84.16 & 0.15 & 0.01 & 60.69 & 0.16 & 0.01 & 89.69 & 0.14 & 0.01 & 58.18 & 0.14 & 0.01 \\
          &    & 4 & 88.99 & 0.17 & 0.01 & 68.79 & 0.17 & 0.01 & 90.40 & 0.19 & 0.01 &  61.21 & 0.15 & 0.01   \\ 
      \midrule
      \multirow{8}{*}{\rotatebox{90}{ResNet18}} 
          & \multirow{2}{*}{BP} 
              & 2 & 92.69 & 24.5 & 0.05 & 75.69 & 24.5 & 0.06 & 89.17 & 24.5 & 0.06 & 62.28 & 24.5 & 0.06 \\
          &    & 4 & 94.01 & 61.25 & 0.09 & 77.07 & 61.25 & 0.09 & 89.07 & 61.25 & 0.09 & 60.80 & 61.25 & 0.09 \\ \cmidrule(lr){2-15}
          & \multirow{2}{*}{HOSVD} 
              & 2 & 92.17 & 0.71 & 6.12 &74.57 & 0.72 & 6.12 & 89.47 & 0.92 & 6.12 &60.02 & 0.73 & 6.12  \\
          &    & 4 & 93.16 & 1.16 & 15.56 & 75.35 & 1.21 & 15.56 &88.71 & 1.77 & 15.56 & 59.26 & 1.41 & 15.56  \\ \cmidrule(lr){2-15}
          & \multirow{2}{*}{ASI \textsuperscript{\dag}} 
              & 2 & 90.77   & 1.24      & 0.04    & 69.70   & 0.90      & 0.04 & 88.75 & 2.01     & 0.04  & 56.77  & 1.51      & 0.04 \\
          &    & 4 & 91.90   & 1.89      & 0.06    & 71.93   & 1.61      & 0.06 & 88.44 & 4.34     & 0.07 & 55.73  & 3.50      & 0.07   \\ \cmidrule(lr){2-15}
          & \multirow{2}{*}{LANCE} 
              & 2 & 91.8 & 0.75 & 0.03 & 74.09 & 0.77 & 0.03 & 89.26 & 0.94 & 0.03 & 59.87 & 0.73 & 0.03  \\
          &    & 4 & 92.89 & 1.11 & 0.05 & 75.7 & 01.16 & 0.05 & 89.12 & 1.78 & 0.05 & 59.50 & 1.36 & 0.05  \\ 
      \midrule
      \multirow{8}{*}{\rotatebox{90}{ResNet34}} 
          & \multirow{2}{*}{BP} 
              & 2 & 93.48 & 24.50 & 0.06 & 76.19 & 24.50 & 0.06 & 91.22 & 24.50 & 0.06 & 64.10 & 24.50 & 0.06 \\
          &    & 4 & 94.37 & 49.00 & 0.12 & 77.50 & 49.00 & 0.12 & 91.60  & 49.00 & 0.12 & 64.22 & 49.00 & 0.12 \\ \cmidrule(lr){2-15}
          & \multirow{2}{*}{HOSVD} 
              & 2 & 92.70 & 0.42 & 6.12 & 75.31 & 0.45 & 6.12 & 90.89 & 0.48 & 6.12 & 60.82 & 0.38 & 6.12  \\
          &    & 4 & 93.80 & 0.92 & 12.25 & 76.01 & 0.95 & 12.25 &91.11 & 1.10 & 12.25 & 62.28 & 0.84 & 12.25 \\ \cmidrule(lr){2-15}
          & \multirow{2}{*}{ASI \textsuperscript{\dag}} 
              & 2 & 90.09   & 0.49      & 0.03    & 69.66   & 0.44      & 0.03 & 91.09 & 0.81     & 0.04 & 58.77  & 0.60      & 0.03   \\
          &    & 4 & 91.54   & 1.24      & 0.07    & 70.60   & 1.00      & 0.07 & 91.41 & 2.05     & 0.07 & 58.85  & 1.58      & 0.07  \\ \cmidrule(lr){2-15}
          & \multirow{2}{*}{LANCE} 
              & 2 & 92.62 & 0.46 & 0.03 & 74.89 & 0.51 & 0.03 & 90.70 & 0.45 & 0.03 & 62.35 & 0.39 & 0.03\\
          &    & 4 & 93.56 & 1.01 & 0.06 & 75.94 & 1.02 & 0.06 & 90.92 & 1.05 & 0.06 & 63.01 & 0.80 & 0.06 \\ 
      \bottomrule
    \end{tabular}%
  }
\end{table}

\paragraph{Scalability to ImageNet-Scale Fine-Tuning. }
To assess the scalability of LANCE to larger datasets and higher-capacity models, we conduct an
ImageNet-scale experiment following the protocol used in prior work~\cite{nguyen2025beyond}.
We split ImageNet-1k into two disjoint subsets of 500 classes (Split~A and Split~B). A model is
first pretrained on Split~A and then fine-tuned on Split~B using either full BP or LANCE.
\Cref{tab:imagenet_split} reports results for ResNet18 and ResNet34 when fine-tuning the last
2 or 4 layers. The trends closely mirror those observed on smaller downstream datasets: LANCE
achieves large reductions in activation memory (up to $\sim$4$\times$) and maintains accuracy
within $\sim$1--2\% of full BP, with only modest differences in training FLOPs. These results
demonstrate that LANCE remains effective and memory-efficient even at ImageNet scale, confirming
that the one-shot low-rank subspace generalizes well to large, high-variance visual distributions.

\begin{table}[!htbp]
        \caption{Performance comparison on continual image classification datasets using multi-head networks. Accuracy and BWT (mean $\pm$ std) are reported over five trials. \textsuperscript{\dag} denotes the results taken from \cite{Saha2021GradientLearning} and \textsuperscript{\ddag} denote the results from the respective original papers.}
        \label{table:continual_image}
        \centering
        \resizebox{\textwidth}{!}{
        \begin{tabular}{l|lllllllll}
        \toprule
        \multicolumn{1}{c|}{\multirow{2}{*}{Method}}  &
          \multicolumn{3}{c}{Split CIFAR100} &
          \multicolumn{3}{c}{Split MiniImageNet} &
          \multicolumn{3}{c}{5-Datasets} 
          \\ \cmidrule{2-10} 
        \multicolumn{1}{c|}{} &
          \multicolumn{1}{c}{ACC (\%) $\uparrow$}  &
          \multicolumn{1}{c}{BWT (\%) $\uparrow$} &
          \multicolumn{1}{c}{Mem (MB) $\downarrow$} &
          \multicolumn{1}{c}{ACC (\%) $\uparrow$} &
          \multicolumn{1}{c}{BWT (\%) $\uparrow$} &
          \multicolumn{1}{c}{Mem (MB) $\downarrow$} &
          \multicolumn{1}{c}{ACC (\%) $\uparrow$} &
          \multicolumn{1}{c}{BWT (\%) $\uparrow$} &
          \multicolumn{1}{c}{Mem (MB) $\downarrow$} 
          \\
        \midrule
        Multitask\textsuperscript{\dag} & $79.58 \pm 0.54$ & $-$ & $-$ & $69.46 \pm 0.62$ & $-$ & $-$ & $91.54 \pm 0.28$ & $-$ & $-$ 
        \\
        \midrule
        EWC \textsuperscript{\dag} & $68.80 \pm 0.88$ & -$2 \pm 1$& $-$ & $52.01 \pm 2.53$ & -$12 \pm 3$& $-$ & $88.64 \pm 0.26$ & -$4 \pm 1$ & $-$
        \\
        HAT \textsuperscript{\dag} & $72.06 \pm 0.50$ & $0 \pm 0$& $-$ & $59.78 \pm 0.57$ & -$3 \pm 0$& $-$ & $91.32 \pm 0.18$ & -$1\pm0$& $-$ 
        \\
        A-GEM \textsuperscript{\dag} & $63.98 \pm 1.22$ & -$15 \pm 2$& $-$ & $57.24 \pm 0.72$ & -$12 \pm 1$& $-$ & $84.04 \pm 0.33$ & -$12 \pm 1$& $-$ 
        \\
        ER\_Res \textsuperscript{\dag} & $71.73 \pm 0.63$ & -$6 \pm 1$& $-$ & $58.94 \pm 0.85$ & -$7 \pm 1$& $-$ & $80.31 \pm 0.22$ & -$4 \pm 0$& $-$ 
        \\
        \midrule
        TRGP+SD \textsuperscript{\ddag} &
          $75.5\pm0.35$ &
          -$0.96\pm0.09$ & $-$ &
          $65.8\pm0.16$ &
          -$0.49\pm0.08$ & $-$ &
          $-$ &
          $-$ & $-$ 
          \\
        LMSP \textsuperscript{\ddag} &
          $74.21$ &
          +$0.94$ & $-$ &
          $64.2$ &
          +$1.55$ & $-$ &
          $93.78$ &
          +$0.07$ & $-$ 
          \\
        \midrule
        GPM \textsuperscript{\dag} &
          $72.48\pm0.54$ &
          -$0.9$ & $22.27$ &
          $60.41\pm0.61$ &
          -$0.9$ & $127.37$ &
          $91.22\pm0.20$ &
          -$1.0$ & $70.33$ 
          \\
        TRGP \textsuperscript{\ddag} &
          $74.64\pm0.32$ &
          -$0.9\pm0.01$ & $78.86$ &
          $61.78\pm0.60$ &
          -$0.5\pm0.60$ & $543.22$ &
          $93.56\pm0.10$ &
          -$0.04\pm0.01$ & $181.57$ 
          \\
        CUBER \textsuperscript{\ddag} &
          $75.54\pm0.22$ &
          +$0.13\pm0.08$ & $326.79$ &
          $62.67\pm0.74$ &
          +$0.23\pm0.15$ & $604.61$ &
          $93.48\pm0.10$ &
          -$0.00\pm0.02$ & $196.91$ 
          \\
        SGP \textsuperscript{\ddag} &
          $76.05\pm0.43$ &
          -$1$ & $22.27$ &
          $62.83\pm0.33$ &
          -$1$ & $127.37$ &
          $-$ &
          $-$ & $-$ 
          \\
        
        CODE-CL\textsuperscript{\ddag}  &
          $77.21\pm0.32$ &
          -$1.1\pm0.28$ & $33.80$ &
          $71.16\pm0.32$ &
          -$1.1\pm0.3$ & $283.82$ &
          $93.51\pm0.13$ &
          -$0.11\pm0.01$ & $116.72$ 
          \\
        \midrule
        \textbf{LANCE} &
          $71.52\pm 0.27$ &
          -$0.17\pm0.34$ & $2.32$ &
          $59.68\pm1.17$ &
          -$0.99\pm0.78$ & $32.96$ &
          $90.76\pm0.31$ &
          -$1.02\pm0.15$ & $34.96$ \\
        \bottomrule
        \end{tabular}
        }
    \end{table}

\subsection{Results: Continual Learning}
We compare LANCE against a broad set of continual learning baselines: regularization based methods (EWC~\cite{Kirkpatrick2017OvercomingNetworks}, HAT~\cite{Serra2018OvercomingTask}), 
replay-based approaches (A-GEM~\cite{Chaudhry2019EfficientA-GEM}, ER\_Res~\cite{Chaudhry2019OnLearning}), 
and projection-based methods (GPM~\cite{Saha2021GradientLearning}, TRGP~\cite{Lin2022TRGP:Learning}, CUBER \cite{Lin22CUBER}, SGP~\cite{Saha2023ContinualProjection}, {SD}~\cite{SD2023}, {LMSP}~\cite{lmsp2025}, CODE-CL~\cite{apolinario2024code}). 
For reference, we also report multitask performance, which serves as an upper bound where all tasks are trained jointly.  
We follow the same experimental setup as \cite{apolinario2024code}: 
a 5-layer AlexNet for Split CIFAR-100, and a reduced ResNet18 for Split MiniImageNet and the 5-Datasets benchmark. 
During the first task, models are trained without constraints using full BP. 
For subsequent tasks, the model from the previous task serves as initialization, and LANCE applies its subspace-constrained training.

\textbf{Answer to Q3. } 
LANCE achieves accuracy competitive with full-rank gradient projection methods such as GPM and CODE-CL, while using drastically less memory. 
As summarized in \cref{table:continual_image}, on Split CIFAR-100 LANCE reaches $\sim$71\% average accuracy (ACC) while reducing activation storage by more than $10\times$ compared to GPM. 
Similar trends are observed on Split MiniImageNet and the 5-Datasets benchmark, where LANCE maintains stable performance without rehearsal memory. 
These results confirm that fixed low-rank subspaces are sufficient to retain representational capacity across tasks, while providing substantial memory savings for resource-constrained devices.

\begin{figure}[t]
  \centering
  \includegraphics[width=0.85\textwidth]{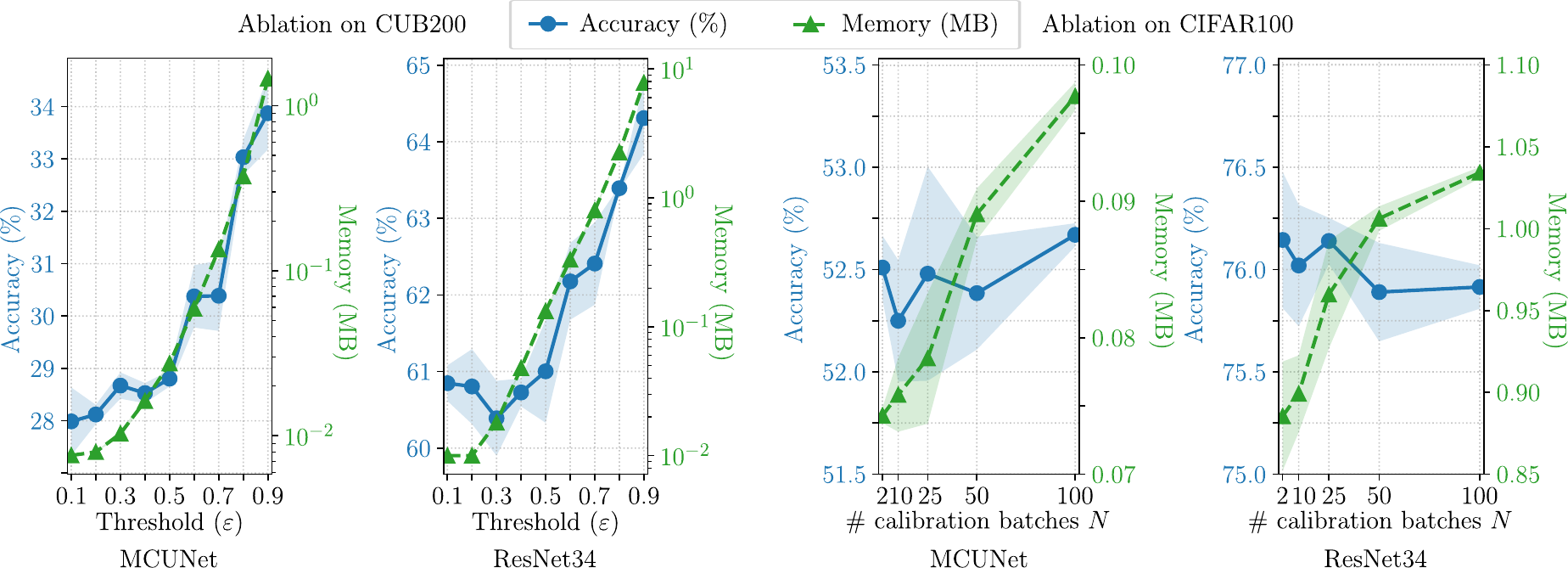}
  \caption{Ablation studies of LANCE. 
    (Left) Effect of the energy threshold $\varepsilon$ on CUB-200 using MCUNet and ResNet34. 
    Accuracy improves steadily as $\varepsilon$ increases, but memory usage grows exponentially, illustrating the trade-off between accuracy and compression.  
    (Right) Effect of the number of calibration batches $N$ on CIFAR-100. 
    Accuracy remains stable even for very small $N$, while memory increases with larger calibration sets. 
    These results show that stable subspaces can be obtained with as few as $N{=}2$ calibration batches, and practical choices of $\varepsilon$ (e.g., 0.7) provide a good balance between accuracy and memory.}
  \label{fig:ablation}
\end{figure}

\subsection{Ablations and Analysis}\label{sec:main_ablations}
We conduct ablations on two key hyperparameters:  
(i) the energy threshold $\varepsilon$ (evaluated on CUB-200), and  
(ii) the number of calibration batches $N$ (evaluated on CIFAR-100 with $\varepsilon{=}0.7$).  
Results for MCUNet and ResNet34 are shown in \cref{fig:ablation}.
As expected, higher $\varepsilon$ increases accuracy but also grows memory usage exponentially, highlighting the need to balance compression and accuracy according to device memory budgets. 
For calibration size $N$, accuracy remains stable across a wide range, while memory usage grows with $N$. 
Remarkably, even $N{=}2$ calibration batches are sufficient to yield stable subspaces, underscoring the robustness of the one-shot calibration phase.

\begin{figure}[t]
  \centering
  \includegraphics[width=0.85\textwidth]{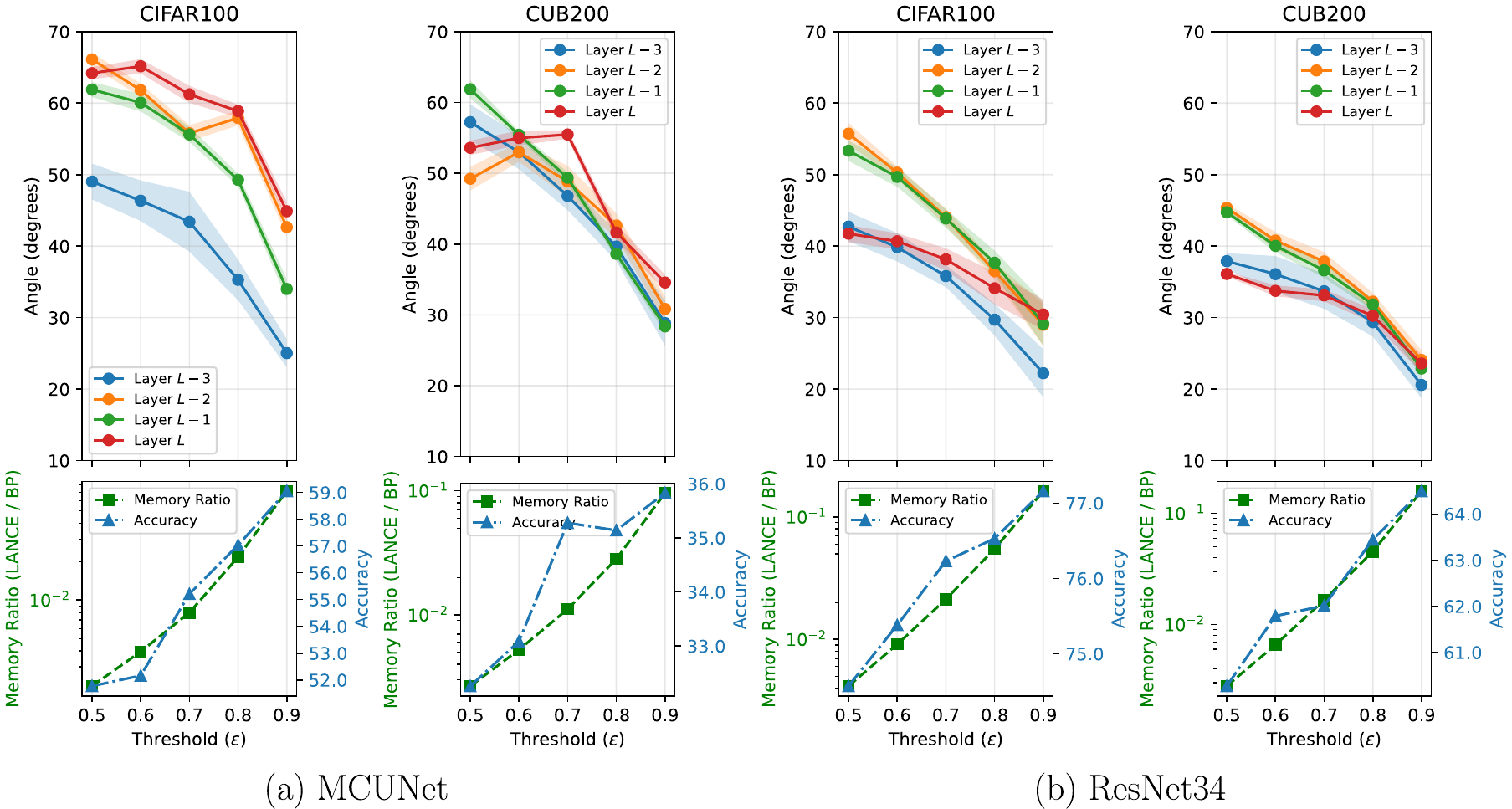}
  \caption{
    {Gradient fidelity across compression ratios.
    We vary the energy threshold $\varepsilon$ controlling the effective memory compression of LANCE and report the angle  (mean $\pm$ std) between full-BP gradients and LANCE-projected gradients over the final 10 epochs (out of 50).
    As $\varepsilon$ decreases, compression becomes more aggressive and gradient alignment degrades accordingly.
    Nevertheless, even under memory reductions of up to two orders of magnitude, LANCE maintains gradient directions within $\sim 70^\circ$ of full BP, indicating preserved descent directions despite strong compression.}
    }

  \label{fig:ablation_gradient_fidelity}
\end{figure}

\begin{figure}[!ht]
  \centering
  \includegraphics[width=\textwidth]{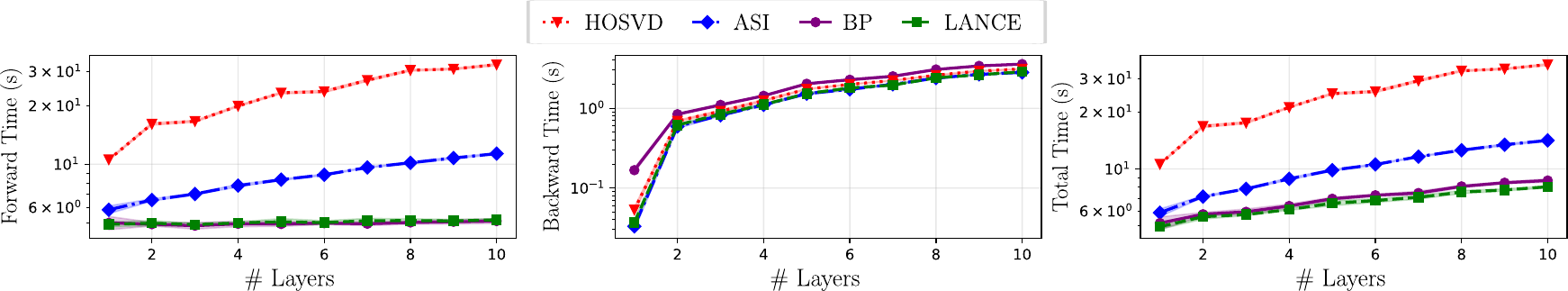}
  \caption{ End-to-end fine-tuning latency on a Raspberry Pi 3B+ (MCUNet on CIFAR-10, batch size 128). Averaged over five trials, LANCE consistently achieves the lowest forward, backward, and total training times across all layer depths. 
    }
  \label{fig:hw_metrics_appendix}
\end{figure}

We additionally evaluate how different compression ratios affect gradient fidelity by measuring the angle between full-BP gradients and LANCE-projected gradients across several values of $\varepsilon$, shown in \cref{fig:ablation_gradient_fidelity}. 
As expected, more aggressive compression (lower $\varepsilon$) leads to larger angles, but the degradation is smooth and controlled. 
Even at the strongest compression settings, reducing activation storage by up to two orders of magnitude, LANCE maintains gradient alignment within $\sim 70^\circ$ of full BP, indicating that the one-shot subspace preserves the dominant descent directions required for stable optimization.

\subsection{Runtime Validation on Edge Hardware}\label{sec:hw_metrics}

While our memory analysis (\cref{sec:results_finetuning}) demonstrates LANCE's ability to fit within strict microcontroller SRAM budgets, we evaluated the framework on a Raspberry Pi 3 Model B+ (ARM Cortex-A53) to validate its practical runtime efficiency. 
Serving as an accessible edge proxy, this hardware allows us to isolate and measure the relative computational overhead of the competing methods without artificial bottlenecking. 
We fine-tuned an MCUNet model for five iterations on CIFAR-10 using a batch size of 128.
Each configuration was run for five independent trials, and we report mean end-to-end latency across 1–10 trainable layers. 
As shown in \cref{fig:hw_metrics_appendix}, LANCE consistently achieves the lowest forward, backward, and total training time. 
Relative to LANCE, BP is $3$–$9\%$ slower, ASI is $19$–$76\%$ slower, and iterative HOSVD is $2.1\times$–$4.4\times$ slower.
This demonstrates that one-shot subspace calibration effectively eliminates the overhead of repeated decompositions, yielding tangible latency speedups that inherently scale down to even tighter energy-constrained microcontrollers.

\section{Related Work}
Storing intermediate activations is the primary memory bottleneck for on-device training~\cite{lin2020tinytl,lin2022ondevice}. 
Prior work has reduced this cost through recomputation (e.g., checkpointing~\cite{chen2016checkpoint}), architectural modifications (e.g., reversible networks~\cite{gomez2017reversible}), or lossy compression and sparsity~\cite{chen2021actnn,barley2023activation,yu2022backrazor}, but often at the expense of extra computation, accuracy, or hardware complexity. 
A complementary line of work exploits low-rank structure in activations~\cite{nguyen2024activation,nguyen2025beyond}. 
Closest to our method is ASI~\cite{nguyen2025beyond}, which incrementally updates subspaces during training; in contrast, LANCE performs a one-shot HOSVD at initialization and reuses the resulting subspaces throughout fine-tuning, avoiding repeated decompositions. 
{Recent CL work such as }LMSP~\cite{lmsp2025} {also uses low-rank representations but focuses on reducing the computational cost of gradient projection, still requiring multiple per-task low-rank memory components; by contrast, LANCE targets the dominant activation-memory bottleneck directly.}
This simple design not only reduces compute and hardware overhead but also enables a continual learning extension: fixed subspaces naturally support null-space allocation across tasks without storing large task-specific matrices.  
System-level methods such as TinyTL~\cite{lin2020tinytl}, MCUNet~\cite{Lin2020MCUNet:Devices}, and the 256KB training engine~\cite{lin2022ondevice} co-design architectures and update rules to fit extreme SRAM budgets, often by freezing layers or pruning gradient paths. 
LANCE is complementary to these approaches: instead of restricting the optimization, it preserves full backpropagation while directly compressing activations.  
Finally, in continual learning, regularization methods~\cite{Kirkpatrick2017OvercomingNetworks,Serra2018OvercomingTask}, replay-based approaches~\cite{Chaudhry2019EfficientA-GEM,Chaudhry2019OnLearning}, and gradient-projection methods~\cite{Saha2021GradientLearning,Lin2022TRGP:Learning,Lin22CUBER,Saha2023ContinualProjection,apolinario2024code, SD2023} all mitigate forgetting by controlling parameter updates or storing exemplars. 
LANCE differs by operating directly in the activation subspace: new tasks are assigned to the orthogonal complement of previously used directions during calibration. 
This yields performance competitive with gradient-projection methods while requiring significantly less memory, and without rehearsal buffers.  

\section{Conclusion}
We introduced LANCE, a one-shot low-rank activation compression framework for efficient on-device learning. 
By calibrating reusable activation subspaces with a single HOSVD, LANCE eliminates repeated decompositions and reduces both memory and computational cost. 
Experiments show that LANCE achieves up to $250\times$ memory savings while maintaining accuracy close to full backpropagation across diverse datasets and models. 
Crucially, the fixed low-rank subspaces naturally extend to continual learning, where LANCE matches the performance of gradient projection methods at a fraction of their memory footprint. 
Together, these results establish LANCE as a practical and scalable approach for enabling fine-tuning and continual learning on resource-constrained edge devices.

\section*{Acknowledgments}
This work was supported in part by the Center for Co-design of Cognitive Systems (CoCoSys), one of the seven centers in JUMP 2.0, a Semiconductor Research Corporation (SRC) program, and in part by the Department of Energy (DoE).


%
%
\bibliographystyle{splncs04}
\bibliography{main}

@String(CVPR  = {IEEE Conf. Comput. Vis. Pattern Recog.})

@String(ICCV  = {Int. Conf. Comput. Vis.})

@String(NeurIPS = {Adv. Neural Inform. Process. Syst.})

@String(AAAI  = {AAAI})

@String(CVPR  = {CVPR})

@String(ICCV  = {ICCV})

@String(NeurIPS = {NeurIPS})

@InProceedings{lmsp2025,
  title = 	 {Divide and Orthogonalize: Efficient Continual Learning with Local Model Space Projection},
  author =       {Shang, Jin and Shao, Simone and Tong, Tian and Yang, Fan and Chen, Yetian and Jiao, Yang and Liu, Jia and Gao, Yan},
  booktitle = 	 {Proceedings of the Forty-first Conference on Uncertainty in Artificial Intelligence},
  pages = 	 {3766--3786},
  year = 	 {2025},
  editor = 	 {Chiappa, Silvia and Magliacane, Sara},
  volume = 	 {286},
  series = 	 {Proceedings of Machine Learning Research},
  month = 	 {21--25 Jul},
  publisher =    {PMLR},
  pdf = 	 {https://raw.githubusercontent.com/mlresearch/v286/main/assets/shang25a/shang25a.pdf},
  url = 	 {https://proceedings.mlr.press/v286/shang25a.html},
  abstract = 	 {Continual learning (CL) has gained increasing interest in recent years due to the need for models that can continuously learn new tasks while retaining knowledge from previous ones. However, existing CL methods often require either computationally expensive layer-wise gradient projections or large-scale storage of past task data, making them impractical for resource-constrained scenarios. To address these challenges, we propose a local model space projection (LMSP)-based continual learning framework that significantly reduces computational complexity from $\mathcal{O}(n^3)$ to $\mathcal{O}(n^2)$ while preserving both forward and backward knowledge transfer with minimal performance trade-offs. We establish a theoretical analysis of the error and convergence properties of LMSP compared to conventional global approaches. Extensive experiments on multiple public datasets demonstrate that our method achieves competitive performance while offering substantial efficiency gains, making it a promising solution for scalable continual learning.}
}

@INPROCEEDINGS{SD2023,
  author={Zhao, Zhen and Zhang, Zhizhong and Tan, Xin and Liu, Jun and Qu, Yanyun and Xie, Yuan and Ma, Lizhuang},
  booktitle={2023 IEEE/CVF Conference on Computer Vision and Pattern Recognition (CVPR)}, 
  title={Rethinking Gradient Projection Continual Learning: Stability/Plasticity Feature Space Decoupling}, 
  year={2023},
  volume={},
  number={},
  pages={3718-3727},
  keywords={Computer vision;Computational modeling;Stability analysis;Pattern recognition;Task analysis;Thermal stability;Transfer;meta;low-shot;continual;or long-tail learning},
  doi={10.1109/CVPR52729.2023.00362}}

@misc{cub200,
  author       = {Wah, Catherine and
                  Branson, Steve and
                  Welinder, Peter and
                  Perona, Pietro and
                  Belongie, Serge},
  title        = {The Caltech-UCSD Birds-200-2011 Dataset},
  month        = aug,
  year         = 2023,
  publisher    = {California Institute of Technology},
  version      = {Published},
}

@InProceedings{petsdataset,
  author = "Omkar M. Parkhi and Andrea Vedaldi and Andrew Zisserman and C. V. Jawahar",
  title = "Cats and Dogs",
  booktitle = "IEEE Conference on Computer Vision and Pattern Recognition",
  year = "2012",
}

@InProceedings{flower102dataset,
  author = "Nilsback, M-E. and Zisserman, A.",
  title = "Automated flower classification over a large number of classes",
  booktitle = "Proceedings of the Indian Conference on Computer Vision, Graphics and Image Processing (2008)",
  year = "2008",
}

@article{chen2016checkpoint,
  title={Training deep nets with sublinear memory cost},
  author={Chen, Tianqi and Xu, Bing and Zhang, Chiyuan and Guestrin, Carlos},
  journal={arXiv preprint arXiv:1604.06174},
  year={2016}
}

@inproceedings{gomez2017reversible,
 author = {Gomez, Aidan N and Ren, Mengye and Urtasun, Raquel and Grosse, Roger B},
 booktitle = {Advances in Neural Information Processing Systems},
 editor = {I. Guyon and U. Von Luxburg and S. Bengio and H. Wallach and R. Fergus and S. Vishwanathan and R. Garnett},
 pages = {},
 publisher = {Curran Associates, Inc.},
 title = {The Reversible Residual Network: Backpropagation Without Storing Activations},
 volume = {30},
 year = {2017}
}

@article{barley2023activation,
  title={Compressing the backward pass of large-scale neural architectures by structured activation pruning},
  author={Barley, Daniel and Fr{\"o}ning, Holger},
  journal={arXiv preprint arXiv:2311.16883},
  year={2023}
}

@inproceedings{chen2021actnn,
  title={ActNN: Reducing Training Memory Footprint via 2-Bit Activation Compressed Training},
  author={Chen, Jianfei and Zheng, Lianmin and Yao, Zhewei and Wang, Dequan and Stoica, Ion and Mahoney, Michael W and Gonzalez, Joseph E},
  booktitle={International Conference on Machine Learning},
  year={2021}
}

@inproceedings{
yu2022backrazor,
title={Back Razor: Memory-Efficient Transfer Learning by Self-Sparsified Backpropogation},
author={Ziyu Jiang and Xuxi Chen and Xueqin Huang and Xianzhi Du and Denny Zhou and Zhangyang Wang},
booktitle={Advances in Neural Information Processing Systems},
editor={Alice H. Oh and Alekh Agarwal and Danielle Belgrave and Kyunghyun Cho},
year={2022},
url={https://openreview.net/forum?id=mTXQIpXPDbh}
}

@inproceedings{
nguyen2024activation,
title={Activation Map Compression through Tensor Decomposition for Deep Learning},
author={Le-Trung Nguyen and A{\"e}l Qu{\'e}lennec and Enzo Tartaglione and Samuel Tardieu and Van-Tam Nguyen},
booktitle={The Thirty-eighth Annual Conference on Neural Information Processing Systems},
year={2024},
}

@inproceedings{
nguyen2025beyond,
title={Beyond Low-rank Decomposition: A Shortcut Approach for Efficient On-Device Learning},
author={Le-Trung Nguyen and A{\"e}l Qu{\'e}lennec and Van-Tam Nguyen and Enzo Tartaglione},
booktitle={Forty-second International Conference on Machine Learning},
year={2025},
}

@inproceedings{lin2020tinytl,
author = {Cai, Han and Gan, Chuang and Zhu, Ligeng and Han, Song},
title = {TinyTL: reduce memory, not parameters for efficient on-device learning},
year = {2020},
isbn = {9781713829546},
booktitle = {Proceedings of the 34th International Conference on Neural Information Processing Systems},
articleno = {947},
numpages = {13},
location = {Vancouver, BC, Canada},
series = {NIPS '20}
}

@inproceedings{lin2022ondevice,
    title     = {On-Device Training Under 256KB Memory},
    author    = {Lin, Ji and Zhu, Ligeng and Chen, Wei-Ming and Wang, Wei-Chen and Gan, Chuang and Han, Song},
    booktitle = {Annual Conference on Neural Information Processing Systems (NeurIPS)},
    year      = {2022}
}

@article{Haoyu2024OnDeviceOnlineLearning,
author = {Ren, Haoyu and Anicic, Darko and Li, Xue and Runkler, Thomas},
title = {On-device Online Learning and Semantic Management of TinyML Systems},
year = {2024},
issue_date = {July 2024},
publisher = {Association for Computing Machinery},
address = {New York, NY, USA},
volume = {23},
number = {4},
issn = {1539-9087},
url = {https://doi.org/10.1145/3665278},
doi = {10.1145/3665278},
journal = {ACM Trans. Embed. Comput. Syst.},
month = jun,
articleno = {55},
numpages = {32},
}

@INPROCEEDINGS{apolinario2025tess,
  author={Apolinario, Marco P. E. and Roy, Kaushik and Frenkel, Charlotte},
  booktitle={2025 International Joint Conference on Neural Networks (IJCNN)}, 
  title={{TESS}: A Scalable Temporally and Spatially Local Learning Rule for Spiking Neural Networks}, 
  year={2025},
  volume={},
  number={},
  pages={1-9},
  keywords={Training;Backpropagation;Neurons;Memory management;Neural activity;Spiking neural networks;Backpropagation through time;Inference algorithms;Synchronization;Voltage control;Spiking Neural Networks;Local Learning Rule;On-device Learning},
  doi={10.1109/IJCNN64981.2025.11227652}}

@INPROCEEDINGS {apolinario2025lls,
author = { Apolinario, Marco P. E. and Roy, Arani and Roy, Kaushik },
booktitle = { 2025 IEEE/CVF Winter Conference on Applications of Computer Vision (WACV) },
title = {{ LLS: Local Learning Rule for Deep Neural Networks Inspired by Neural Activity Synchronization }},
year = {2025},
volume = {},
ISSN = {},
pages = {7807-7816},
keywords = {Training;Visualization;Accuracy;Neural activity;Neurons;Memory management;Vectors;Hardware;Synchronization;Image classification},
doi = {10.1109/WACV61041.2025.00758},
month =mar}

@InProceedings{dellaferrera2022pepita,
  title = 	 {Error-driven Input Modulation: Solving the Credit Assignment Problem without a Backward Pass},
  author =       {Dellaferrera, Giorgia and Kreiman, Gabriel},
  booktitle = 	 {Proceedings of the 39th International Conference on Machine Learning},
  pages = 	 {4937--4955},
  year = 	 {2022},
  volume = 	 {162},
  month = 	 {17--23 Jul},
  publisher =    {PMLR},
}

@article{hinton2022forward,
  title={The forward-forward algorithm: Some preliminary investigations},
  author={Hinton, Geoffrey},
  journal={arXiv preprint arXiv:2212.13345},
  volume={2},
  number={3},
  pages={5},
  year={2022}
}

@article{ankit2020panther,
  title={Panther: A programmable architecture for neural network training harnessing energy-efficient reram},
  author={Ankit, Aayush and El Hajj, Izzat and Chalamalasetti, Sai Rahul and Agarwal, Sapan and Marinella, Matthew and Foltin, Martin and Strachan, John Paul and Milojicic, Dejan and Hwu, Wen-Mei and Roy, Kaushik},
  journal={IEEE Transactions on Computers},
  volume={69},
  number={8},
  pages={1128--1142},
  year={2020},
  publisher={IEEE}
}

@article{lillicrap2014random,
  title={Random feedback weights support learning in deep neural networks},
  author={Lillicrap, Timothy P and Cownden, Daniel and Tweed, Douglas B and Akerman, Colin J},
  journal={arXiv preprint arXiv:1411.0247},
  year={2014}
}

@inproceedings{nokland2016DFA,
 author = {N{\o}kland, Arild},
 booktitle = {Advances in Neural Information Processing Systems (NeurIPS)},
 pages = {},
 title = {Direct Feedback Alignment Provides Learning in Deep Neural Networks},
 volume = {29},
 year = {2016}
}

@inproceedings{liang2024inflora,
  title={Inflora: Interference-free low-rank adaptation for continual learning},
  author={Liang, Yan-Shuo and Li, Wu-Jun},
  booktitle={Proceedings of the IEEE/CVF Conference on Computer Vision and Pattern Recognition},
  pages={23638--23647},
  year={2024}
}

@InProceedings{apolinario2024code,
    author    = {Apolinario, Marco P. E. and Choudhary, Sakshi and Roy, Kaushik},
    title     = {C{ODE}-{CL}: Conceptor-Based Gradient Projection for Deep Continual Learning},
    booktitle = {Proceedings of the IEEE/CVF International Conference on Computer Vision (ICCV)},
    month     = {October},
    year      = {2025},
    pages     = {775-784}
}

@inproceedings{he2016deep,
  title={Deep residual learning for image recognition},
  author={He, Kaiming and Zhang, Xiangyu and Ren, Shaoqing and Sun, Jian},
  booktitle={Proceedings of the IEEE conference on computer vision and pattern recognition},
  pages={770--778},
  year={2016}
}

@INPROCEEDINGS {mobilenetv2,
author = { Sandler, Mark and Howard, Andrew and Zhu, Menglong and Zhmoginov, Andrey and Chen, Liang-Chieh },
booktitle = { 2018 IEEE/CVF Conference on Computer Vision and Pattern Recognition (CVPR) },
title = {{ MobileNetV2: Inverted Residuals and Linear Bottlenecks }},
year = {2018},
volume = {},
ISSN = {},
pages = {4510-4520},
abstract = { In this paper we describe a new mobile architecture, MobileNetV2, that improves the state of the art performance of mobile models on multiple tasks and benchmarks as well as across a spectrum of different model sizes. We also describe efficient ways of applying these mobile models to object detection in a novel framework we call SSDLite. Additionally, we demonstrate how to build mobile semantic segmentation models through a reduced form of DeepLabv3 which we call Mobile DeepLabv3. is based on an inverted residual structure where the shortcut connections are between the thin bottleneck layers. The intermediate expansion layer uses lightweight depthwise convolutions to filter features as a source of non-linearity. Additionally, we find that it is important to remove non-linearities in the narrow layers in order to maintain representational power. We demonstrate that this improves performance and provide an intuition that led to this design. Finally, our approach allows decoupling of the input/output domains from the expressiveness of the transformation, which provides a convenient framework for further analysis. We measure our performance on ImageNet [1] classification, COCO object detection [2], VOC image segmentation [3]. We evaluate the trade-offs between accuracy, and number of operations measured by multiply-adds (MAdd), as well as actual latency, and the number of parameters. },
keywords = {Manifolds;Neural networks;Computer architecture;Standards;Computational modeling;Task analysis},
doi = {10.1109/CVPR.2018.00474},
url = {https://doi.ieeecomputersociety.org/10.1109/CVPR.2018.00474},
publisher = {IEEE Computer Society},
address = {Los Alamitos, CA, USA},
month =Jun}

@article{Wang2024AApplication,
    title = {{A Comprehensive Survey of Continual Learning: Theory, Method and Application}},
    year = {2024},
    journal = {IEEE Transactions on Pattern Analysis and Machine Intelligence},
    author = {Wang, Liyuan and Zhang, Xingxing and Su, Hang and Zhu, Jun},
    number = {08},
    month = {8},
    pages = {5362--5383},
    volume = {46},
    publisher = {IEEE Computer Society},
    doi = {10.1109/TPAMI.2024.3367329},
    issn = {0162-8828},
    pmid = {38407999},
    arxivId = {2302.00487},
    keywords = {Artificial Intelligence Systems, Catastrophic Forgetting, Complexity Theory, Continual Learning, Continuous Learning, Distillation Loss, Experience Replay, Facilitate Knowledge Transfer, Feature Space, Federated Learning, Fisher Information Matrix, Foundation Model, Generative Adversarial Networks, Gradient Direction, Gradient Projection, Incremental Data, Incremental Learning, Intelligent Systems, Labeled Data, Lifelong Learning, Limited Training Samples, Object Detection, Optimization Based Approach, Parameter Space, Regular Form, Robust Representation, Self Supervised Learning, Semantic Segmentation, Shift In Representation, Specific Tasks, Stability Analysis, Stable Memory, Surveys, Task Analysis, Task Parameters, Testing, Training, Unlabeled Data, Visual Classification, Visualization}
}

@article{Kudithipudi2022BiologicalMachines,
    title = {{Biological underpinnings for lifelong learning machines}},
    year = {2022},
    journal = {Nature Machine Intelligence 2022 4:3},
    author = {Kudithipudi, Dhireesha and Aguilar-Simon, Mario and Babb, Jonathan and Bazhenov, Maxim and Blackiston, Douglas and Bongard, Josh and Brna, Andrew P. and Chakravarthi Raja, Suraj and Cheney, Nick and Clune, Jeff and Daram, Anurag and Fusi, Stefano and Helfer, Peter and Kay, Leslie and Ketz, Nicholas and Kira, Zsolt and Kolouri, Soheil and Krichmar, Jeffrey L. and Kriegman, Sam and Levin, Michael and Madireddy, Sandeep and Manicka, Santosh and Marjaninejad, Ali and McNaughton, Bruce and Miikkulainen, Risto and Navratilova, Zaneta and Pandit, Tej and Parker, Alice and Pilly, Praveen K. and Risi, Sebastian and Sejnowski, Terrence J. and Soltoggio, Andrea and Soures, Nicholas and Tolias, Andreas S. and Urbina-Mel{\'{e}}ndez, Darío and Valero-Cuevas, Francisco J. and van de Ven, Gido M. and Vogelstein, Joshua T. and Wang, Felix and Weiss, Ron and Yanguas-Gil, Angel and Zou, Xinyun and Siegelmann, Hava},
    number = {3},
    month = {3},
    pages = {196--210},
    volume = {4},
    publisher = {Nature Publishing Group},
    url = {https://www.nature.com/articles/s42256-022-00452-0},
    doi = {10.1038/s42256-022-00452-0},
    issn = {2522-5839},
    keywords = {Computer science, Intelligence, Learning algorithms}
}

@article{Saha2023ContinualProjection,
    title = {{Continual Learning with Scaled Gradient Projection}},
    year = {2023},
    journal = {Proceedings of the 37th AAAI Conference on Artificial Intelligence, AAAI 2023},
    author = {Saha, Gobinda and Roy, Kaushik},
    month = {6},
    pages = {9677--9685},
    volume = {37},
    publisher = {AAAI Press},
    isbn = {9781577358800},
    doi = {10.1609/AAAI.V37I8.26157},
    issn = {2159-5399},
    arxivId = {2302.01386}
}

@inproceedings{Lin22CUBER,
author = {Lin, Sen and Yang, Li and Fan, Deliang and Zhang, Junshan},
title = {Beyond not-forgetting: continual learning with backward knowledge transfer},
year = {2022},
isbn = {9781713871088},
publisher = {Curran Associates Inc.},
address = {Red Hook, NY, USA},
booktitle = {Proceedings of the 36th International Conference on Neural Information Processing Systems},
articleno = {1176},
numpages = {13},
location = {New Orleans, LA, USA},
series = {NIPS '22}
}

@article{Chaudhry2019EfficientA-GEM,
    title = {{Efficient Lifelong Learning with A-GEM}},
    year = {2019},
    journal = {International Conference on Learning Representations},
    author = {Chaudhry, Arslan and Ranzato, Marc'Aurelio and Rohrbach, Marcus and Elhoseiny, Mohamed}
}

@article{Hadsell2020EmbracingNetworks,
    title = {{Embracing Change: Continual Learning in Deep Neural Networks}},
    year = {2020},
    journal = {Trends in Cognitive Sciences},
    author = {Hadsell, Raia and Rao, Dushyant and Rusu, Andrei A. and Pascanu, Razvan},
    number = {12},
    month = {12},
    pages = {1028--1040},
    volume = {24},
    publisher = {Elsevier Ltd},
    issn = {1879307X},
    pmid = {33158755},
    keywords = {artificial intelligence, lifelong, memory, meta-learning, non-stationary}
}

@inproceedings{Lopez-Paz2017GradientLearning,
    title = {{Gradient Episodic Memory for Continual Learning}},
    year = {2017},
    booktitle = {Proceedings of the 31st International Conference on Neural Information Processing Systems},
    author = {Lopez-Paz, David and Ranzato, Marc ' Aurelio},
    doi = {10.5555/3295222.3295393}
}

@article{Saha2021GradientLearning,
    title = {{Gradient Projection Memory for Continual Learning}},
    year = {2021},
    journal = {International Conference on Learning Representations},
    author = {Saha, Gobinda and Garg, Isha and Roy, K.}
}

@article{Krizhevsky2012ImageNetNetworks,
    title = {{ImageNet Classification with Deep Convolutional Neural Networks}},
    year = {2012},
    journal = {Advances in Neural Information Processing Systems},
    author = {Krizhevsky, Alex and Sutskever, Ilya and Hinton, Geoffrey E.},
    volume = {25},
}

@article{Krizhevsky2009LearningImages,
    title = {{Learning Multiple Layers of Features from Tiny Images}},
    year = {2009},
    author = {Krizhevsky, Alex}
}

@article{Frenkel2021LearningNetworks,
    title = {{Learning Without Feedback: Fixed Random Learning Signals Allow for Feedforward Training of Deep Neural Networks}},
    year = {2021},
    journal = {Frontiers in Neuroscience},
    author = {Frenkel, Charlotte and Lefebvre, Martin and Bol, David},
    month = {2},
    pages = {629892},
    volume = {15},
    publisher = {Frontiers Media S.A.},
    doi = {10.3389/FNINS.2021.629892/BIBTEX},
    issn = {1662453X},
    arxivId = {1909.01311},
    keywords = {backpropagation, biologically-plausible learning, deep neural networks, edge computing, update locking, weight transport}
}

@inproceedings{Lin2020MCUNet:Devices,
    title = {{MCUNet: Tiny Deep Learning on IoT Devices}},
    year = {2020},
    booktitle = {34th Conference on Neural Information Processing Systems},
    author = {Lin, Ji and Chen, Wei-Ming and Lin, Yujun and Cohn, John and Gan, Chuang and Han, Song},
    month = {7},
    arxivId = {2007.10319}
}

@article{Chaudhry2019OnLearning,
    title = {{On Tiny Episodic Memories in Continual Learning}},
    year = {2019},
    journal = {arXiv:1902.10486},
    author = {Chaudhry, Arslan and Facebook, Marcus Rohrbach and Research, A I and Elhoseiny, Mohamed and Ajanthan, Thalaiyasingam and Dokania, Puneet K and Torr, Philip H S and Ranzato, Marc ' Aurelio},
    month = {2},
    url = {https://arxiv.org/abs/1902.10486v4},
    arxivId = {1902.10486}
}

@article{Kirkpatrick2017OvercomingNetworks,
    title = {{Overcoming catastrophic forgetting in neural networks}},
    year = {2017},
    journal = {Proceedings of the National Academy of Sciences of the United States of America},
    author = {Kirkpatrick, James and Pascanu, Razvan and Rabinowitz, Neil and Veness, Joel and Desjardins, Guillaume and Rusu, Andrei A. and Milan, Kieran and Quan, John and Ramalho, Tiago and Grabska-Barwinska, Agnieszka and Hassabis, Demis and Clopath, Claudia and Kumaran, Dharshan and Hadsell, Raia},
    number = {13},
    month = {3},
    pages = {3521--3526},
    volume = {114},
    publisher = {National Academy of Sciences},
    issn = {10916490},
    pmid = {28292907},
    arxivId = {1612.00796},
    keywords = {Artificial intelligence, Continual learning, Deep learning, Stability plasticity, Synaptic consolidation}
}

@article{Serra2018OvercomingTask,
    title = {{Overcoming catastrophic forgetting with hard attention to the task}},
    year = {2018},
    journal = {International Conference on Machine Learning},
    author = {Serr{\`{a}}, J. and Sur{\'{i}}s, Dídac and Miron, M. and Karatzoglou, Alexandros}
}

@article{Lin2022TRGP:Learning,
    title = {{TRGP: Trust Region Gradient Projection for Continual Learning}},
    year = {2022},
    journal = {International Conference on Learning Representations},
    author = {Lin, Sen and Yang, Li and Fan, Deliang and Zhang, Junshan}
}

@inproceedings{Ke2021FWT,
author = {Ke, Zixuan and Liu, Bing and Ma, Nianzu and Xu, Hu and Shu, Lei},
title = {Achieving forgetting prevention and knowledge transfer in continual learning},
year = {2021},
isbn = {9781713845393},
publisher = {Curran Associates Inc.},
address = {Red Hook, NY, USA},
booktitle = {Proceedings of the 35th International Conference on Neural Information Processing Systems},
articleno = {1719},
numpages = {14},
series = {NIPS '21}
}

\newpage
\appendix

\section{Datasets, Models and Hyperparameters}
\label{app:exp-setup}

This section provides details on the architecture of all models used in this work, the dataset statistics, the hyperparameters for each experiment, and the compute resources employed.

\paragraph{Reported metrics.}
For single-task fine-tuning we report top-1 accuracy on the test split, peak activation memory in MB (estimated from tensors retained for the backward pass), and training FLOPs.\footnote{Peak activation memory and FLOPs formulas are detailed in Appendix~\ref{app:complexity}.}
For continual learning experiments (Section~\ref{sec:lance-cl}), we additionally report ACC and BWT following standard CL definitions.

    \subsection{For Single-Task Fine-Tuning Experiments}

    \subsubsection{Datasets}
    We evaluate on five image classification datasets spanning scale and difficulty: CIFAR-10 \cite{Krizhevsky2009LearningImages}, CIFAR-100 \cite{Krizhevsky2009LearningImages}, Oxford-IIIT Pets \cite{petsdataset}, Flowers-102 \cite{flower102dataset}, and CUB-200 \cite{cub200}.
    We use the official training/test splits provided by each dataset and do not carve out an additional validation set.
    For all datasets, images are resized to match the native input resolution of the chosen architecture (e.g., $224\times224$ for ImageNet-pretrained models), and standard data augmentation is applied consisting of random horizontal flipping.

    \subsubsection{Model Architectures}
    For single-task experiments we use four pretrained models:
    ResNet18, ResNet34 \cite{he2016deep}, MobileNetV2 \cite{mobilenetv2}, and MCUNet \cite{Lin2020MCUNet:Devices}.
    All of them are initialized from ImageNet-1k pretrained weights.
    For each model, we unfreeze the last $N\in\{2,4\}$ convolutional layers and fine-tune them while the rest of the network remains frozen.

    \subsubsection{Training Protocol}
    Training runs for 50 epochs with stochastic gradient descent (SGD), learning rate 0.05,
    batch size 128, and without weight decay.
    For LANCE, we allocate $N{=}100$ calibration mini-batches to estimate the activation covariance (unless otherwise specified) and compute truncated subspaces. The energy threshold $\varepsilon$ for subspace retention $0.7$.
    During training, activations of the selected layers are projected into the fixed low-rank cores,
    which are the only tensors retained for the backward pass.

    \subsubsection{Hyperparameters}
    The hyperparameters used in our single-task fine-tuning experiments are summarized in Table~\ref{tab:st-hyperparams}.
    
    \begin{table}[h]
    \centering
    \renewcommand{\arraystretch}{1.1}
    \caption{Hyperparameters used in single-task fine-tuning experiments.}
    \label{tab:st-hyperparams}
    \begin{tabular}{lc}
    \toprule
    Hyperparameter & Value \\
    \midrule
    Epochs                 & 50 \\
    Optimizer              & SGD \\
    Learning rate          & 0.05 \\
    Batch size             & 128 \\
    Weight decay           & 0 \\
    \# Unfrozen layers     & 2 or 4 \\
    \# Calibration batches ($N$)   & 100    \\   
    LANCE threshold $\varepsilon$ & 0.7 \\
    \bottomrule
    \end{tabular}
    \end{table}

    \subsection{For Continual Learning Experiments}

    \begin{table*}[th]
        \caption{5-Datasets statistics.}
        \label{table:datasets_stats_2}
        \centering
        \resizebox{0.8\textwidth}{!}{
        \begin{tabular}{l|ccccc}
        \toprule
        Dataset           & CIFAR10 & MNIST & SVHN & Fashion MNIST & notMNIST  \\ 
        \midrule
        Number of classes  & 10 &  10 & 10 & 10 & 10 \\
        Training samples   & 47500 &  57000 & 69595 & 57000 & 16011 \\
        Validation samples & 2500 &  3000 & 3662 & 3000 & 842 \\
        Test samples & 10000 &  10000 & 26032 & 10000 & 1873 \\
        \bottomrule
        \end{tabular}
        }
    \end{table*}

    \begin{table}[th]
        \caption{Split CIFAR100 and Split miniImageNet datasets statistics.}
        \label{table:datasets_stats}
        \centering
        \resizebox{0.7\columnwidth}{!}{
        \begin{tabular}{l|cc}
        \toprule
        Dataset           &  Split CIFAR100 & Split miniImageNet  \\ 
        \midrule
        Number of tasks ($T$) &   10 & 20               \\ 
        Sample dimensions         &      $3\times32\times32$ &   $3\times84\times84$ \\
        Number of classes per task &   10 & 5 \\
        Training samples per task &   4750 & 2375 \\
        Validation samples per task &   250 & 125 \\
        Test samples per task &   1000 & 500 \\
        \bottomrule
        \end{tabular}
        }
    \end{table}
    \begin{table*}[th]
        \caption{List of hyperparameters used in Continual Learning experiments.}
        \label{table:hyperparams}
        \centering
        \resizebox{0.8\textwidth}{!}{
        \begin{tabular}{l|ccc}
        \toprule
        Dataset &  Split CIFAR100 & Split miniImageNet & 5-Datasets \\ 
        \midrule
        Optimizer       &   SGD & SGD & SGD \\
        Learning rate ($\eta$) &  $0.01$ & $0.1$ & $0.1$ \\
        Batch size ($b$) &  $64$ & $64$ & $64$ \\
        Min. learning rate ($\eta_{th}$) &  $10^{-5}$ & $10^{-5}$ & $10^{-3}$ \\
        Learning rate decay factor       &  $1/2$ & $1/2$ & $1/3$ \\
        Patience                         &  $6$  & $6$ & $5$ \\
        Number of epochs ($E$)           &  $200$ & $100$ & $100$ \\
        Energy Threshold ($\varepsilon$)           &  $0.9$ & $0.9$ & $0.9$ \\
        CL Threshold ($\varepsilon_{\text{CL}}$) &  $0.95$ & $0.985$ & $0.97$ \\
        \# Calibration batches ($N$) &  $100$ & $100$ & $100$ \\
        \# batches for memory retention ($N_{\text{CL}}$) &  $10$ & $10$ & $10$ \\
        \bottomrule
        \end{tabular}
        }
    \end{table*}

    \subsubsection{Model Architecture}\label{appendix:cl_architecture}

    We employ two neural network architectures in our experiments: an AlexNet-inspired model following \cite{Serra2018OvercomingTask}, and a Reduced ResNet18 as described in \cite{Lopez-Paz2017GradientLearning}.  
    The AlexNet-like network integrates batch normalization (BN) layers after every convolutional and fully connected layer except the final classifier. BN parameters are updated only during the first task and kept fixed thereafter.  
    Its architecture consists of three convolutional layers with $64$, $128$, and $256$ filters, using kernel sizes of $4\times4$, $3\times3$, and $2\times2$, respectively, each followed by a $2\times2$ max-pooling operation. Two fully connected layers with $2048$ units each are appended. ReLU activations are applied throughout, and dropout is used with rates of $0.2$ for the first two layers and $0.5$ for the rest.  
    The Reduced ResNet18 follows the setup in \cite{Saha2021GradientLearning}. For Split miniImageNet tasks, its first convolutional layer uses stride $2$, whereas for the 5-Datasets benchmark, stride $1$ is used.  
    Across all experiments, models are trained with cross-entropy loss.

    \subsubsection{Dataset Statistics}\label{appendix:cl_data_statistics}
    
    Tables~\ref{table:datasets_stats} and~\ref{table:datasets_stats_2} summarize the datasets employed in our continual learning benchmarks. We adopt the same train/test splits as in \cite{Saha2021GradientLearning, Lin2022TRGP:Learning, Saha2023ContinualProjection, apolinario2024code}.  
    For the 5-Datasets benchmark, grayscale samples are duplicated across three channels to match RGB input requirements. All images are resized to $32\times32$, yielding an effective input dimension of $3\times32\times32$.

    \subsubsection{Metrics}\label{appendix:cl_metrics}
    {Following standard practice in prior continual learning works} \cite{Lopez-Paz2017GradientLearning, Saha2021GradientLearning, Lin2022TRGP:Learning, Saha2023ContinualProjection, apolinario2024code}, {we evaluate performance using two metrics:
    the average final accuracy across all tasks (ACC) and the Backward Transfer (BWT), which
    captures the extent to which learning new tasks affects performance on previously learned ones.
    They are defined as:}
    \begin{equation}
        \text{ACC} = \sum_{i=1}^{T} \frac{\emA_{T,i}}{T} \text{ ; }
        \text{BWT} = \sum_{i=1}^{T-1} \frac{\emA_{T,i}  - \emA_{i,i}}{T-1},
    \end{equation}
    {where $T$ denotes the total number of tasks, and $\emA_{j,i}$ represents the accuracy on the
    $i$-th task after the model has sequentially trained up to the $j$-th task $(i \leq j)$. }
    \subsubsection{Hyperparameters}\label{appendix:hyperparameters}
    The hyperparameters used in our experiments are detailed in Table~\ref{table:hyperparams}.

    \subsection{Compute resources}\label{appendix:compute_resources}
    All experiments were conducted on a shared internal Linux server equipped with an AMD EPYC 7502 32-Core Processor, 504 GB of RAM, and four NVIDIA A40 GPUs, each with 48 GB of GDDR6 memory.
    Additionally, code was implemented using Python 3.9 and PyTorch 2.2.1 with CUDA 11.8.

\section{Proofs for Section~\ref{sec:theory}}\label{app:lance-proofs}

\LANCEDescent*
\begin{proof}
Let $\delta$ denote the upstream error for the layer and let $\vx$ be the unfolded input.
For a linear (or unfolded convolutional) layer, the full gradient is
$\nabla_{\mW}\cL_{\mathrm{full}} = \delta\,\vx^\top$.
In LANCE, the only change (by design) is replacing $\vx$ with $\PP^{(l)}\vx$, so
\[
\nabla_{\mW}\cL_{\mathrm{LANCE}}
= \delta\,(\PP^{(l)}\vx)^\top
= \delta\,\vx^\top \PP^{(l)}
= \nabla_{\mW}\cL_{\mathrm{full}}\,\PP^{(l)}.
\]
Since $\PP^{(l)}$ is an orthogonal projector (Assumption~\ref{assump:orth}),
\begin{align*}
\big\langle \nabla_{\mW}\cL_{\mathrm{LANCE}},\nabla_{\mW}\cL_{\mathrm{full}}\big\rangle
& = \mathrm{tr}\!\left(\big(\nabla_{\mW}\cL_{\mathrm{full}}\PP^{(l)}\big)^\top \nabla_{\mW}\cL_{\mathrm{full}}\right)\\
&= \|\nabla_{\mW}\cL_{\mathrm{full}}\PP^{(l)}\|_F^2\\
&= \|\nabla_{\mW}\cL_{\mathrm{LANCE}}\|_F^2 \ge 0.
\end{align*}
Thus $-\nabla_{\mW}\cL_{\mathrm{LANCE}}$ is a descent direction unless it is zero.
\end{proof}

\LANCEConvergence*
\begin{proof}
By the Descent Lemma for an $L$-smooth function,
\[
\cL(\vtheta-\eta \vg) \le \cL(\vtheta) - \eta\langle \nabla \cL(\vtheta),\vg\rangle + \tfrac{L\eta^2}{2}\|\vg\|_2^2.
\]
Set $\vg=\QQ\,\nabla\cL(\vtheta)$. Because $\QQ$ is an orthogonal projector (blockwise right-multiplication by $\PP^{(l)}$),
$\langle \nabla \cL, \QQ\nabla \cL\rangle = \|\QQ\nabla \cL\|_2^2$. Hence
\[
\cL(\vtheta_{k+1}) \le \cL(\vtheta_k) - \eta\|\QQ\nabla\cL(\vtheta_k)\|_2^2
+ \tfrac{L\eta^2}{2}\|\QQ\nabla\cL(\vtheta_k)\|_2^2
\le \cL(\vtheta_k) - \tfrac{\eta}{2}\|\QQ\nabla\cL(\vtheta_k)\|_2^2
\]
for $\eta\le 1/L$. Summation gives $\sum_k \|\QQ\nabla\cL(\vtheta_k)\|_2^2<\infty$ and thus
$\|\QQ\nabla\cL(\vtheta_k)\|_2\to 0$. Any limit point $\vtheta_\star$ satisfies $\QQ\nabla\cL(\vtheta_\star)=\mathbf{0}$.
\end{proof}

\LANCEGap*
\begin{proof}
From \Cref{thm:lance-convergence}, $\|\QQ\nabla\cL(\vtheta_k)\|_2\to 0$.
Then
\[
\|\nabla\cL(\vtheta_k)\|_2
\;\le\;
\|\QQ\nabla\cL(\vtheta_k)\|_2 + \|(I-\QQ)\nabla\cL(\vtheta_k)\|_2
\;\to\; \le\; C\sqrt{1-\varepsilon}.
\]
Take a limit point $\vtheta_\star$.
\end{proof}

\begin{remark}[Why the leakage bound is reasonable]
Right-multiplication by $\PP^{(l)}$ removes only the gradient components aligned with the discarded input subspaces.
One-shot HOSVD calibration bounds the relative energy of those components by $\sqrt{1-\varepsilon}$ in Frobenius norm,
and layer-wise Lipschitz constants contract this via the chain rule to produce the stated bound.
\end{remark}

\section{Complexity Analysis of LANCE}\label{app:complexity}
    Let $\tX^{(l)} \in \sR^{n_1 \times n_2 \times n_3}$, its compressed core $\tG^{(l)} \in \sR^{r_1 \times r_2 \times r_3}$, and the next-layer output $\tX^{(l+1)} \in \sR^{n_1 \times n_2 \times \hat{n}_3}$.
    Assume parameters $\vtheta^{(l)}\in\sR^{n_3\times\hat{n}_3}$ operate on the last dimension.
    We estimate FLOPs and memory during fine-tuning and omit the one-shot HOSVD cost since it is performed offline.
    
    The forward compression cost is $\sum_{i=1}^3 n_i r_i \prod_{j \neq i} r_j$, the cost of projecting $\tX^{(l)}$ with $\{\mU^{(l)}_i\}_{i=1}^3$. 
    Adding the layer’s standard forward cost $\hat{n}_3\prod_{i=1}^{3} n_i$ gives:
    \begin{equation}
        \mathrm{FLOPs}^{\mathrm{FW}}_{\mathrm{LANCE}} \;=\; \hat{n}_3\prod_{i=1}^{3} n_i \;+\; n_1 r_1 n_2 n_3 \;+\; r_1 n_2 r_2 n_3 \;+\; r_1 r_2 n_3 r_3.
    \end{equation}
    
    Similarly, the backward pass FLOPs with LANCE can be estimated as:
    \begin{equation}
        \mathrm{FLOPs}^{\mathrm{BW}}_{\mathrm{LANCE}} \;=\; \prod_{i=1}^{3} n_i r_1 \;+\; \prod_{i=1}^{3} r_i n_2 \;+\; r_1 n_2 r_3 n_3 \;+\; r_1 n_2 n_3 \hat{n}_3 .
    \end{equation}
    
    For full BP, the forward and backward FLOPs are:
    \begin{equation}
        \mathrm{FLOPs}^{\mathrm{FW}}_{\mathrm{BP}} \;=\; \hat{n}_3\prod_{i=1}^{3} n_i,\quad 
        \mathrm{FLOPs}^{\mathrm{BW}}_{\mathrm{BP}} \;=\; \hat{n}_3\prod_{i=1}^{3} n_i.
    \end{equation}
    
    Therefore, the overall improvement is
    \[
    S_{\mathrm{FLOPs}}
    \;=\;
    \frac{\mathrm{FLOPs}^{\mathrm{FW}}_{\mathrm{BP}} + \mathrm{FLOPs}^{\mathrm{BW}}_{\mathrm{BP}}}
         {\mathrm{FLOPs}^{\mathrm{FW}}_{\mathrm{LANCE}} + \mathrm{FLOPs}^{\mathrm{BW}}_{\mathrm{LANCE}}}.
    \]
    Since $r_i\ll n_i$, we obtain an improvement over BP on the order of $\sim1.5\times$, consistent with Fig.~\ref{fig:method}c.
    
    For memory usage, the improvement is measured as
    \[
    S_{\mathrm{Mem}}
    \;=\;
    \frac{\mathrm{Mem}_{\mathrm{BP}}}{\mathrm{Mem}_{\mathrm{LANCE}}}
    \;=\;
    \frac{\prod_{i=1}^{3} n_i}{\prod_{i=1}^{3} r_i + \sum_{i=1}^{3} n_i r_i}.
    \]
    Thus, LANCE reduces activation storage by a factor of $S_{\mathrm{Mem}}$ with negligible amortized overhead, since the decomposition is one-shot rather than repeated every iteration.

\section{Additional Gradient-Compression Ablations}\label{appendix:compression_ablations}

{For completeness, we extend the gradient fidelity analysis of} Sec.~\ref{sec:main_ablations} {to three
additional fine-tuning datasets: Flowers102, Oxford-IIIT Pets, and CIFAR-10. Using the same
experimental setup as in the main paper, we vary the energy threshold $\varepsilon$ to induce
different activation compression ratios and measure (i) the angle between full-BP gradients and
LANCE-projected gradients, (ii) the resulting memory ratio, and (iii) the accuracy obtained under
each setting. Appendix} Fig.~\ref{fig:ablation_gradient_fidelity_appendix}{ shows results for MCUNet and
ResNet34. The behavior is highly consistent with the CUB-200 and CIFAR-100 experiments:
gradient alignment degrades smoothly as compression increases, yet even aggressive compression
($\sim$2 orders of magnitude memory reduction) preserves descent directions within $< 70^\circ$.
These findings demonstrate that the one-shot activation subspace generalizes well across models and datasets, further validating the stability and robustness of LANCE's low-rank representations.}

\begin{figure}[t]
  \centering
  \includegraphics[width=\textwidth]{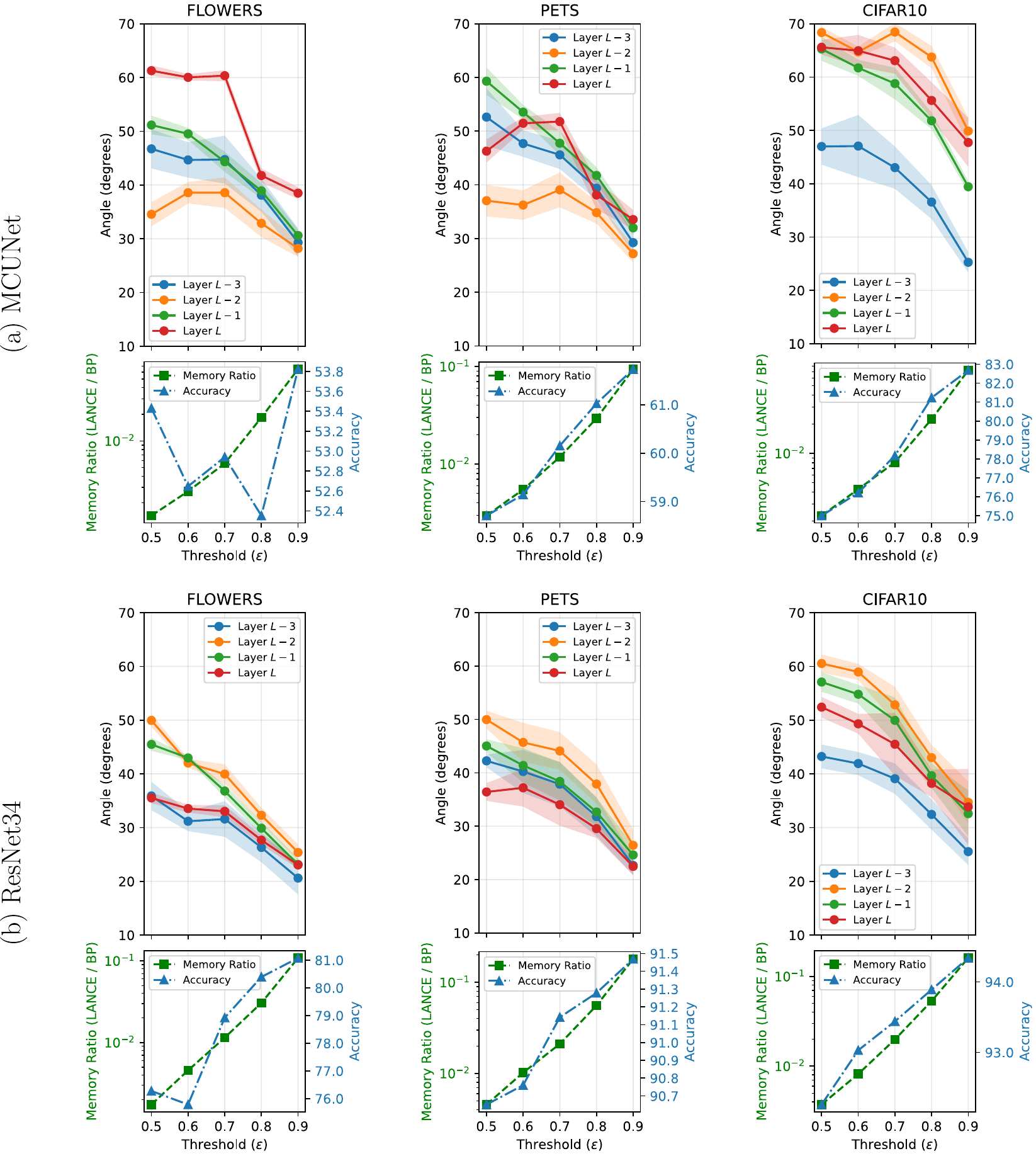}
  \vspace{-7mm}
  \caption{ {Additional gradient–compression ablations on Flowers, Pets, and CIFAR-10. We repeat the analysis of} Fig.~\ref{fig:ablation_gradient_fidelity}{ on three additional fine-tuning datasets using MCUNet and ResNet34. For each dataset, we report the mean and standard deviation of the angle between full-BP gradients and LANCE-projected gradients over the final 10 epochs of training, together with the corresponding memory ratio (LANCE/BP) and accuracy across different energy thresholds $\varepsilon$. Across all datasets and architectures, gradient alignment degrades smoothly as $\varepsilon$ decreases, while extreme compression (up to two orders of magnitude in activation memory) continues to yield descent directions within $\sim 70^\circ$ of full BP. These trends closely mirror those observed in the main experiments, reinforcing the robustness and consistency of LANCE's low-rank subspace across diverse data distributions.}
    }

  \label{fig:ablation_gradient_fidelity_appendix}
\end{figure}

\section{Robustness of LANCE Under Small-Batch Regimes}\label{sec:batch_size_ablation}

{To evaluate LANCE under more restrictive memory settings, we additionally benchmarked batch
sizes $\text{BS}=4$ and $\text{BS}=8$, which are more representative of MCU- and IoT-class devices.
The results} (Tables~\ref{table:small_batch_1}--\ref{table:small_batch_2}) {show that LANCE
maintains the same qualitative behavior observed with $\text{BS}=128$: memory reductions of
1--2 orders of magnitude, negligible FLOPs overhead, and accuracy comparable to full BP.
This indicates that LANCE's efficiency gains do not rely on large batch sizes and extend reliably
to resource-constrained edge settings.}

\begin{table}[t]
  \centering
  \vspace{+5pt}
  \caption{{Performance and efficiency comparison. Results are reported in terms of accuracy, memory usage, and Giga FLOPS (GFLOPs) for different batch sizes (BS).}}
  \label{table:small_batch_1}
  \resizebox{\textwidth}{!}{%
    \begin{tabular}{llc|ccc|ccc|ccc|ccc}
      \toprule
      \multirow{2}{*}{\bf Mdl.} & \multirow{2}{*}{\bf Method } & \multirow{2}{*}{\bf \# Layers} & \multicolumn{3}{c}{\bf CIFAR10 (BS = 4)} & \multicolumn{3}{|c}{\bf CIFAR10 (BS = 8)} & \multicolumn{3}{|c}{\bf CIFAR100 (BS = 4)} & \multicolumn{3}{|c}{\bf CIFAR100 (BS = 8)} \\ \cmidrule(lr){4-15}
       & & &  \textbf{Acc} $\uparrow$ & \textbf{MB} $\downarrow$ & \textbf{GFLOPS} $\downarrow$
        & \textbf{Acc} $\uparrow$ & \textbf{MB} $\downarrow$ & {\bf GFLOPS} $\downarrow$ & \textbf{Acc} $\uparrow$ & \textbf{MB} $\downarrow$ & {\bf GFLOPS} $\downarrow$ & \textbf{Acc} $\uparrow$ & \textbf{MB} $\downarrow$ & {\bf GFLOPS} $\downarrow$ \\ 
      \midrule
      \multirow{4}{*}{\rotatebox{90}{MCUNet}} 
          & \multirow{2}{*}{BP} 
              & 2 & 77.17 & 0.220 & 0.025 & 76.87 & 0.439 & 0.049 & 54.24 & 0.220 & 0.025& 53.96 & 0.439 & 0.049\\
          &    & 4 & 83.93 & 0.403 & 0.039 & 83.68 & 0.806 & 0.079 & 60.99 & 0.403 & 0.039 & 59.27 & 0.806 & 0.079 \\ \cmidrule(lr){2-15}
          & \multirow{2}{*}{LANCE} 
              & 2 & 69.72 & 0.007& 0.014 & 76.32 & 0.011& 0.028 & 45.70 & 0.008& 0.014 &  51.27 & 0.010& 0.028 \\
          &    & 4 & 74.23 & 0.018& 0.023 & 77.89 & 0.020& 0.045 & 49.23 & 0.015& 0.023 & 53.36 & 0.019& 0.045 \\ 
      \midrule
      \multirow{4}{*}{\rotatebox{90}{ResNet34}} 
          & \multirow{2}{*}{BP} 
              & 2 & 94.08 & 0.766 & 1.850 &  93.92 & 1.531 & 3.699 & 77.85 & 0.766 & 1.850 & 77.99 & 1.531 & 3.699 \\
          &    & 4 & 94.96 & 1.531 & 3.699 & 95.10 & 3.062 & 7.399 & 79.02 & 1.531 & 3.699  & 79.10 & 3.062 & 7.399 \\ \cmidrule(lr){2-15}
          & \multirow{2}{*}{LANCE} 
              & 2 & 91.75 & 0.059& 1.018 &92.29 & 0.093& 2.022 & 73.36 & 0.061& 1.021 & 74.51 & 0.101& 2.036\\
          &    & 4 & 92.05 & 0.168& 2.114 &93.20 & 0.232& 4.124 & 73.09 & 0.156& 2.096 & 76.17 & 0.252& 4.161 \\ 
      \bottomrule
    \end{tabular}%
  }
\end{table}

\begin{table}[t]
  \centering
  \vspace{+5pt}
  \caption{{Performance and efficiency comparison. Results are reported in terms of accuracy, memory usage, and Giga FLOPS (GFLOPs) for different batch sizes (BS).}}
  \label{table:small_batch_2}
  \resizebox{\textwidth}{!}{%
    \begin{tabular}{llc|ccc|ccc|ccc|ccc}
      \toprule
      \multirow{2}{*}{\bf Mdl.} & \multirow{2}{*}{\bf Method } & \multirow{2}{*}{\bf \# Layers} & \multicolumn{3}{c}{\bf CUB200 (BS = 4)} & \multicolumn{3}{|c}{\bf CUB200 (BS = 8)} & \multicolumn{3}{|c}{\bf PETS (BS = 4)} & \multicolumn{3}{|c}{\bf PETS (BS = 8)} \\ \cmidrule(lr){4-15}
       & & &  \textbf{Acc} $\uparrow$ & \textbf{MB} $\downarrow$ & \textbf{GFLOPS} $\downarrow$
        & \textbf{Acc} $\uparrow$ & \textbf{MB} $\downarrow$ & {\bf GFLOPS} $\downarrow$ & \textbf{Acc} $\uparrow$ & \textbf{MB} $\downarrow$ & {\bf GFLOPS} $\downarrow$ & \textbf{Acc} $\uparrow$ & \textbf{MB} $\downarrow$ & {\bf GFLOPS} $\downarrow$ \\ 
      \midrule
      \multirow{4}{*}{\rotatebox{90}{MCUNet}} 
          & \multirow{2}{*}{BP} 
              & 2 & 32.57 & 0.220 & 0.025 & 32.84 & 0.439 & 0.049 & 55.52 & 0.220 & 0.025 & 55.55 & 0.439 & 0.049 \\
          &    & 4 & 37.50  & 0.403 & 0.039 & 37.04 & 0.806 & 0.079 & 61.98 & 0.403 & 0.039 & 61.35 & 0.806 & 0.079  \\ \cmidrule(lr){2-15}
          & \multirow{2}{*}{LANCE} 
              & 2 & 34.79 & 0.011& 0.015 & 34.97 & 0.013& 0.029 & 60.32 & 0.011& 0.015 & 59.85 & 0.015& 0.029 \\
          &    & 4 & 35.90 & 0.024& 0.024 & 36.12 & 0.029& 0.047 & 60.75 & 0.028& 0.024 & 59.72 & 0.032& 0.047 \\ 
      \midrule
      \multirow{4}{*}{\rotatebox{90}{ResNet34}} 
          & \multirow{2}{*}{BP} 
              & 2 & 66.50  & 0.766 & 1.850 & 66.86 & 1.531 & 3.699 & 91.44 & 0.766 & 1.850 & 91.11 & 1.531 & 3.699  \\
          &    & 4 & 69.09 & 1.531 & 3.699 & 69.57 & 3.062 & 7.399 & 91.14 & 1.531 & 3.699 & 91.63 & 3.062 & 7.399 \\ \cmidrule(lr){2-15}
          & \multirow{2}{*}{LANCE} 
              & 2 & 65.53 & 0.053& 1.006 & 64.81 & 0.094& 2.022 & 91.20 & 0.075& 1.043 & 90.87 & 0.111& 2.055 \\
          &    & 4 & 65.72 & 0.160& 2.102 & 66.31 & 0.220& 4.102 & 90.84 & 0.182& 2.136 & 90.92 & 0.275& 4.202 \\ 
      \bottomrule
    \end{tabular}%
  }
\end{table}

\section{Limitations and Future Direction.}
{LANCE opens several promising directions for future work. First, while the method relies on activation subspaces obtained from a small calibration set, developing mechanisms to adapt or refine these subspaces under strong domain shift or evolving representations would broaden its applicability—including extending the approach to earlier stages of \emph{pre-training}, where representation drift is more pronounced. In continual learning, relaxing the strict orthogonality constraint could enable positive forward transfer between related tasks while retaining stability. Additionally, augmenting the fixed low-rank subspace with selective or lightweight updates may improve adaptability on tasks requiring larger representational changes. Finally, although LANCE is compatible with MCU-class devices, integrating it with device-specific compilation and memory-scheduling pipelines presents an exciting opportunity to realize its full hardware potential.}

\section{Code Availability}
\label{app:code}

The implementation of LANCE, together with scripts to reproduce all experiments, will be released in a public repository after the review process. 
This will include training and evaluation code, dataset preprocessing pipelines, and experiment configuration files to ensure full reproducibility.

\end{document}